\documentclass{article}
\usepackage{makecell}
\usepackage{enumitem}
\usepackage{microtype}
\usepackage{graphicx}
\usepackage{subcaption}
\usepackage{booktabs} 
\usepackage{newunicodechar}
\usepackage{amssymb} 
\usepackage{bbm}  
\usepackage{amsmath} 

\usepackage{graphicx}
\usepackage{subcaption} 
\newunicodechar{↵}{\ensuremath{\hookleftarrow}}
\usepackage[T1]{fontenc}
\usepackage{hyperref}

\usepackage[accepted]{icml2026}

\usepackage{amssymb}   
\usepackage{multirow}  
\usepackage{graphicx}  
\usepackage{amsmath}
\usepackage{amssymb}
\usepackage{mathtools}
\usepackage{amsthm}

\usepackage[capitalize,noabbrev]{cleveref}

\theoremstyle{plain}
\newtheorem{theorem}{Theorem}[section]

\newtheorem{lemma}[theorem]{Lemma}

\theoremstyle{definition}

\theoremstyle{remark}

\usepackage[textsize=tiny]{todonotes}

\icmltitlerunning{Cross-sample Consistency Regularization for Sparse Autoencoders}

\definecolor{malicious}{RGB}{68,1,84}
\definecolor{safe}{RGB}{253,231,37}

\begin{document}

\twocolumn[
  \icmltitle{C$^{2}$R: Cross-sample Consistency Regularization Mitigates Feature Splitting and Absorption in Sparse Autoencoders}



  \icmlsetsymbol{equal}{*}

  \begin{icmlauthorlist}
    \icmlauthor{Haoran Jin}{ustc,cogskl}
    \icmlauthor{Xiting Wang}{gsai}
    \icmlauthor{Shijie Ren}{gsai}
    \icmlauthor{Hong Xie}{ustc,cogskl}
    \icmlauthor{Defu Lian}{ustc,cogskl}
  \end{icmlauthorlist}

  \icmlaffiliation{ustc}{University of Science and Technology of China}
  \icmlaffiliation{cogskl}{State Key Laboratory of Cognitive Intelligence}
  \icmlaffiliation{gsai}{Gaoling School of Artificial Intelligence, Renmin University of China}

  \icmlcorrespondingauthor{Xiting Wang}{xitingwang@ruc.edu.cn}
  \icmlcorrespondingauthor{Hong Xie}{hongx87@ustc.edu.cn}
  \icmlcorrespondingauthor{Defu Lian}{liandefu@ustc.edu.cn}

  \icmlkeywords{Machine Learning, ICML}

  \vskip 0.3in
]



\printAffiliationsAndNotice{}  

\begin{abstract}
Sparse Autoencoders (SAEs) are widely used to interpret large language models by decomposing activations into sparse, human-understandable features, but scaling to large dictionaries exposes fundamental challenges. Systematic studies reveal pervasive feature splitting that fragments coherent concepts into non-atomic latents and widespread feature absorption that creates arbitrary exceptions in general features, severely compromising latent reliability. These issues stem from inconsistent latent assignment across samples: without cross-sample constraints, per-sample optimization often allows a single underlying concept to be inconsistently distributed across multiple redundant or interfering latents. To address this, we introduce C$^2$R (\underline{\textbf{C}}ross-sample \underline{\textbf{C}}onsistency \underline{\textbf{R}}egularization). C$^2$R explicitly encourages that each semantic feature is consistently represented by a unified latent across the batch by penalizing the co-activation of directionally similar latents. Comprehensive evaluation demonstrates that C$^2$R effectively mitigates both splitting and absorption while, crucially, preserving reconstruction fidelity, providing a principled solution that enhances latent interpretability without degrading model performance. Source code is available\footnote{\url{https://github.com/hr-jin/Cross-sample-Consistency-Regularization}}.
\end{abstract}

\section{Introduction}


\begin{table}[htbp]
\centering
\resizebox{\columnwidth}{!}{%
\begin{tabular}{c|c|c|c|c|c}
\hline
\multirow{2}{*}{SAE Constraints} & \multicolumn{2}{c|}{Theoretical Guarantee} & \multicolumn{2}{c|}{Intuitively Solved} & \multirow{2}{*}{\makecell{Reconstruction\\Preservation}} \\ \cline{2-5}
 & Splitting & Absorption & Splitting & Absorption &  \\ \hline
$\ell_1$ & $\times$ & $\times$ & $\times$ & $\times$ & $\checkmark$  \\
TopK & $\times$ & $\times$ & $\times$ & $\times$ & $\checkmark$  \\
Batch TopK & $\times$ & $\times$ & $\times$ & $\times$ & $\checkmark$ \\
Matryoshka & $\times$ & $\times$ & $\checkmark$ & $\checkmark$ & $\times$  \\
Ort & $\times$ & $\times$  & $\times$ & $\checkmark$ & $\checkmark$  \\
Ours & $\checkmark^*$ & $\checkmark^*$ & $\checkmark$ & $\checkmark$ & $\checkmark$ \\ \hline
\end{tabular}%
}
\caption{Comparison of different SAE constraints in terms of theoretical guarantees, intuitive solutions, and reconstruction fidelity. Our cross-sample consistency regularization uniquely offers a theoretical guarantee against splitting and absorption and preserves reconstruction fidelity. $^*$Theoretical guarantee holds under the condition in Eq.~\ref{eq:feature_disp_condition}, which is empirically satisfied in 88.1\% of absorption pairs (see Appendix~\ref{app:eq13_verification}).}
\label{tab:comparison}
\vspace{-10pt}
\end{table}

Sparse Autoencoders (SAEs) have emerged as a powerful tool for interpreting large language models (LLMs), breaking down complex internal representations into sparse, interpretable features~\citep{huben2023sparse, bricken2023monosemanticity}. These features provide valuable insights into reasoning, alignment~\citep{zhao-etal-2025-steering, yeo2025understanding, wang2025resa}, knowledge awareness, hallucinations~\citep{ferrandoknow}, and cross-model feature spaces~\citep{lan2025quantifyingfeaturespaceuniversality} of LLMs. This approach is grounded in the hypothesis that the semantic \emph{features} in a model’s representation are more effectively captured by an overcomplete sparse basis~\citep{olshausen1997sparse} than by dense neuron activations, which tend to be polysemantic. Ideally, each latent in an SAE corresponds to a single, human-interpretable concept.

While SAEs are effective, they face significant challenges, specifically feature splitting~\citep{bricken2023monosemanticity, leasksparse} and feature absorption~\citep{chanin2024absorption}, which undermine the reliability of learned latents.
Feature splitting fragments coherent, high-level concepts into overly specific pieces. For example, a single ``Mathematics'' feature might break down into separate latents for ``Algebra,'' ``Geometry,'' and others~\citep{chanin2024absorption}. Although these granular features are interpretable, this fragmentation is problematic because it obscures the true high-level concept the model functionally uses. Moreover, this is inefficient: the dictionary wastes capacity on redundant variations of known concepts instead of finding new ones.
Feature absorption, on the other hand, creates ``holes'' in general features when specific latents capture their activations. A ``starts with S'' latent, for instance, might fail to activate on ``short'' or ``small'' because token-specific latents absorb the signal. This effectively changes the latent to ``starts with S (except for short/small),'' distorting the intended pattern. 
\citet{leasksparse} show that splitting scales with model size, while \citet{chanin2024absorption} find that absorption affects hundreds of LLM SAEs. These systematic failures undermine the utility of SAEs for critical tasks like causal analysis and circuit discovery.


We argue that these failures arise from a mismatch between the hierarchical nature of language model features and the local scope of standard sparsity constraints.
Real-world concepts are inherently hierarchical~\citep{bussmannlearning}, yet common objectives like $\ell_1$~\citep{bricken2023monosemanticity} or TopK~\citep{gaoscaling} enforce sparsity on a per-sample basis, which potentially penalizes the hierarchical structure.
activating both uses more of the sparsity budget than activating the child feature alone. Consequently, the optimizer suppresses the parent latent and forces the child latent to take over its role to keep the active count low. Similarly, regarding feature splitting, the objective does not distinguish between activating a general latent or a specific one. The SAE allows disjoint latents to handle different contexts of a single concept, as nothing ensures consistent latent assignment across samples. 
Solving these issues, therefore, requires looking beyond per-sample optimization to enforce cross-sample consistency in how latents are selected.

To address this issue, we propose C$^2$R (\underline{\textbf{C}}ross-sample \underline{\textbf{C}}onsistency \underline{\textbf{R}}egularization).
This objective builds on the geometry of the Minkowski inequality~\citep{Gruber1979} and the strict convexity of the $\ell_2$ norm.
It exploits the fact that the sum of the norms of separate vectors strictly exceeds the norm of their sum: $\|u\|_2 + \|v\|_2 > \|u+v\|_2$ for non-aligned vectors.
By applying this constraint across the batch dimension, C$^2$R makes it expensive to split a concept into multiple disjoint latents.
This formulation penalizes spreading semantic information across redundant latents, driving the SAEs to consolidate activations into a single, consistent latent without supervision.

Our contributions are threefold:
\begin{itemize}[topsep=0pt, itemsep=8pt, parsep=0pt]
    \item \textbf{Theoretical diagnosis:} We identify the lack of cross-sample consistency in per-sample sparsity objectives as the root cause of feature splitting and absorption, providing a unified formal analysis of these phenomena. \looseness=-1
    \item \textbf{Principled objective:} We propose C$^2$R, a novel regularization objective that utilizes decoder geometry and batch-level statistics to enforce consistent latent selection, effectively distinguishing between true polysemanticity and harmful redundancy.
    \item \textbf{Empirical validation:} We demonstrate that C$^2$R significantly mitigates splitting and absorption, achieving better feature hierarchy without compromising reconstruction fidelity compared to state-of-the-art baselines.
\end{itemize}

\section{Related Work}
\label{sec:background}

\subsection{Sparse Autoencoders}

Sparse Autoencoders (SAEs) are grounded in the linear representation hypothesis, which posits that the dense activation space of a language model is constructed from the superposition of sparse, discernible concepts, referred to as \textit{features}. The goal of an SAE is to recover these ground-truth features by learning a dictionary of \textit{latents}. Ideally, there exists a one-to-one mapping where each learned latent corresponds precisely to a single meaningful feature.

Formally, given an input activation vector $x \in \mathbb{R}^{d_{model}}$ (e.g., from a Transformer's residual stream), an SAE projects $x$ into a higher-dimensional sparse latent code $f \in \mathbb{R}^{d_{dict}}$, where $d_{dict} \gg d_{model}$. The encoding process is parameterized by an encoder weight matrix $W_e \in \mathbb{R}^{d_{dict} \times d_{model}}$ and a bias $b_e$:
\begin{equation}
    f = \phi(W_e x + b_e),
\end{equation}
where $\phi$ is a non-linear activation function, typically ReLU, TopK, or JumpReLU, designed to induce sparsity. The input is then reconstructed via a linear decoder $W_d \in \mathbb{R}^{d_{model} \times d_{dict}}$:
\begin{equation}
    \hat{x} = W_d f + b_d.
\end{equation}
The training objective minimizes a combination of reconstruction error and a sparsity penalty:
\begin{equation}
    \mathcal{L}_{\text{SAE}}(x) = \|x - \hat{x}\|_2^2 + \lambda \mathcal{R}(f).
\end{equation}
Common choices for the regularizer $\mathcal{R}(f)$ include the $\ell_1$ norm \citep{bricken2023monosemanticity} or the auxiliary loss associated with TopK constraints \citep{gaoscaling}. 
Various architectural improvements have been proposed to enhance SAE quality. Gated SAEs \citep{rajamanoharan2024improving} and JumpReLU SAEs \citep{rajamanoharan2024jumping} introduce learnable thresholds to improve the fidelity-sparsity frontier. However, these methods focus on the per-sample activation sparsity rather than enforcing the hierarchical structure of the SAE. \looseness=-1

\subsection{Structural SAEs}
More recently, approaches attempting to structure the latent space have emerged. Batch TopK SAEs \citep{leasksparse} relax the rigid per-sample TopK constraint to a batch-level aggregate, allowing for variable sparsity across samples. While this improves reconstruction, it lacks any mechanism to enforce hierarchical structure among the latents and still faces feature absorption and splitting challenges. 

Matryoshka SAEs~\citep{bussmannlearning} enforce a nested structure where subsets of latents are trained to approximate the input at different sparsity levels. While this creates a hierarchy, it compromises reconstruction fidelity and lacks a clear theoretical explanation for why it would fix the optimization issues of standard sparsity objectives. OrtSAE~\citep{korznikov2025ort} addresses feature splitting and composition by penalizing the cosine similarity between decoder weights. However, this approach tries to handle absorption indirectly via the decoder geometry, rather than addressing the encoder activation patterns where absorption is formally defined. In contrast, our method targets the latent activations directly to penalize redundancy, offering a theoretically guaranteed solution derived from the formal definitions of splitting and absorption.

\subsection{Minkowski Inequality~\citep{Gruber1979}}
For any real number $p \geq 1$ and any two real sequences $a = (a_{1}, a_{2}, \ldots, a_{n})$ and $b = (b_{1}, b_{2}, \ldots, b_{n})$, their $p$-norms satisfy
\vspace{-5pt}
\begin{equation}
\label{eq:minkowski}
\|a + b\|_{p} \leq \|a\|_{p} + \|b\|_{p}.
\end{equation}
When the activations of a single semantic feature are distributed across multiple redundant SAE latents, the combined $p$-norm of their activations exceeds that of a single latent capturing the same feature. This inequality motivates our regularization term that penalizes redundant feature allocation across latents, thereby constraining feature splitting and absorption.
\section{Unified Problem Formulation}
\label{sec:formulation}

In this section, we propose a geometric framework that unifies feature splitting and absorption. Rather than viewing them as separate pathologies, we model them as instances of latent redundancy arising from perturbed basis directions. We introduce a \textbf{redundancy parameter $\alpha$} to quantify the extent to which a semantic feature ``leaks'' into varying latents, allowing us to derive a single consistency condition that prevents both failure modes.

Let $L_1$ denote the ideal latent direction that captures the complete semantic feature $F$, and let $L_2$ denote an orthogonal direction that captures residual, non-feature components.  
Both $L_1$ and $L_2$ are unit vectors and orthogonal to each other:
\begin{equation}
\|L_1\| = \|L_2\| = 1, \quad L_1 \perp L_2.
\end{equation}
Let $z_1^{(i)}$ and $z_2^{(i)}$ be the corresponding activations of $L_1$ and $L_2$ for sample $i$.  
In the ideal case (no splitting nor absorption, $\alpha = 0$), all information related to $F$ is represented solely by $L_1$, and $L_2$ contributes only to the orthogonal reconstruction components.  
The activation pattern is:
\[
\begingroup
\small
\centering
\begin{array}{c|cc}
\text{Sample} & L_1 & L_2 \\
\hline
1 & z_1^{(1)} & 0 \\
\vdots & \vdots & \vdots \\
m & z_1^{(m)} & 0 \\
m+1 & z_1^{(m+1)} & z_2^{(m+1)} \\
\vdots & \vdots & \vdots \\
m+n & z_1^{(m+n)} & z_2^{(m+n)} \\
\end{array}
\endgroup
\]
Here, $z_1^{(i)}$ corresponds to the feature-aligned activation along $L_1$, while $z_2^{(i)}$ encodes the orthogonal residual components required for accurate reconstruction. Samples $1, \dots, m$ exclusively activate $L_1$, whereas samples $m+1, \dots, m+n$ possess a non-zero component from $L_2$.
In practice, sparse autoencoders often learn perturbed latent directions, denoted $L_1'$ and $L_2'$, that deviate from the ideal basis.  
We assume $L_2'$ contains a fraction $\alpha \in [0,1]$ of the feature direction $L_1$, forming a new latent that partially overlaps with it:
\begin{equation}
L_1' = L_1, \quad L_2' = \frac{(1-\alpha)L_1 + \alpha L_2}{\|(1-\alpha)L_1 + \alpha L_2\|}.
\label{eq:latent_mixture_refined}
\end{equation}
Here $\alpha$ quantifies the degree of cross-latent feature sharing.  
When $\alpha = 0$, $L_2'$ is perfectly orthogonal to $L_1'$ (no dispersion).  
When $\alpha = 1$, $L_2'$ fully aligns with $L_1'$, corresponding to a complete feature split into two disjoint latents.

The corresponding activation pattern becomes:

\begin{figure*}[t]
    \centering
    \includegraphics[width=\linewidth]{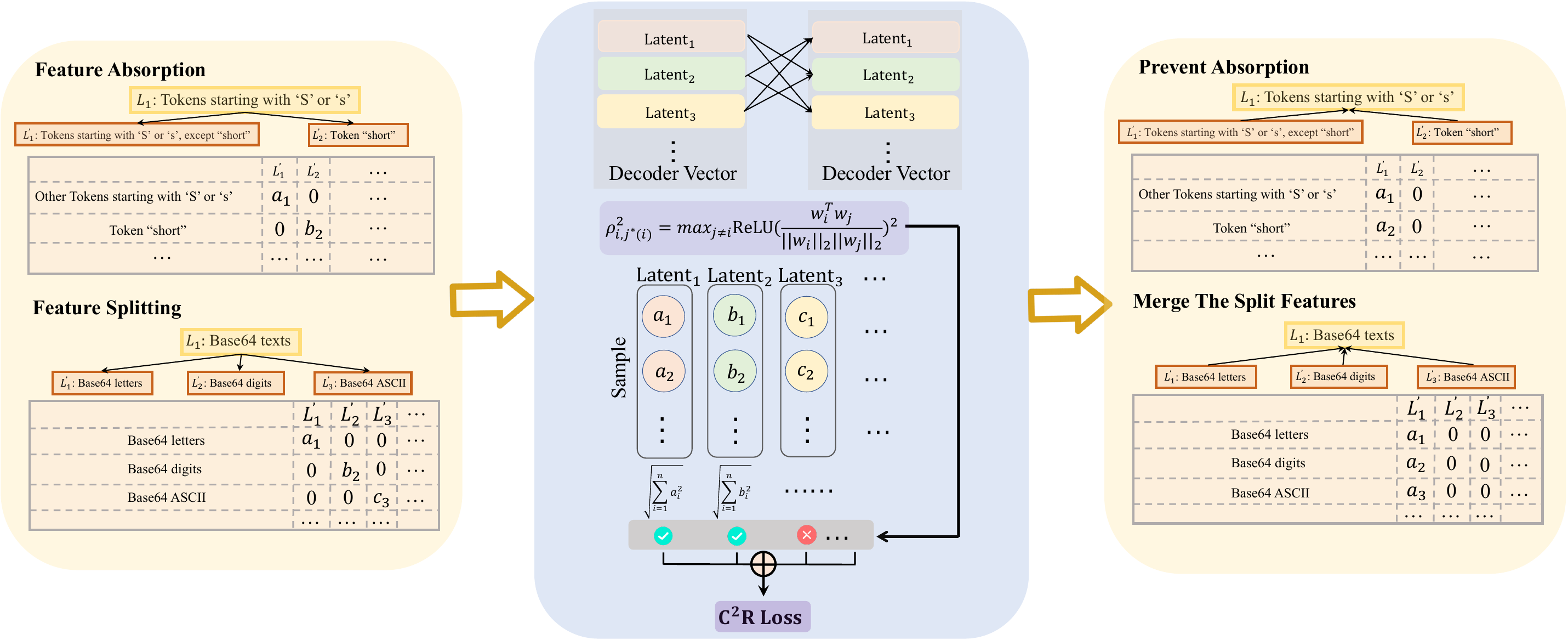}
\caption{Overview of C$^2$R (Cross-sample Consistency Regularization).  
Each mini-batch contains activations of multiple samples encoded by a sparse autoencoder.
C$^2$R enforces consistency of latent usage across samples by constraining activation patterns along the batch dimension.  
This encourages each latent to represent a complete semantic feature rather than fragmented or absorbed subfeatures, mitigating feature splitting and absorption.}
\label{fig:main_method}
\end{figure*}

\begin{table}[htbp]
  \centering 
  \scalebox{0.8}{
$\begin{array}{c|cc}
\text{Sample} & L_1' & L_2' \\
\hline
1 & z_1^{(1)} & 0 \\
\vdots & \vdots & \vdots \\
m & z_1^{(m)} & 0 \\
m+1 & (1-\alpha)z_1^{(m+1)} & \sqrt{(\alpha z_1^{(m+1)})^{2} + (z_2^{(m+1)})^{2}} \\
\vdots & \vdots & \vdots \\
m+n & (1-\alpha)z_1^{(m+n)} & \sqrt{(\alpha z_1^{(m+n)})^{2} + (z_2^{(m+n)})^{2}} \\
\end{array}$
}
\caption{Activation pattern when feature splitting or absorption occurs.}
\label{array:unified_absorption}
\end{table}
\vspace{-5pt}

This activation pattern illustrates that for samples $m+1$ through $m+n$, part of the original feature direction $L_1$ is reconstructed via $L_2'$ due to the shared component $\alpha L_1$ in Eq.~\ref{eq:latent_mixture_refined}.  
The $\sqrt{(\alpha z_1^{(i)})^{2} + (z_2^{(i)})^{2}}$ term ensures the total reconstructed model activation remains the same through vector addition of $L_1'$ and $L_2'$.  
This formalization unifies feature splitting and absorption:  
splitting and full absorption correspond to $\alpha$=$1$, where feature-aligned energy is duplicated across latents,  
and partial absorption corresponds to $\alpha$<$1$, where the $L_2'$ inherits part of the feature along with its orthogonal component.

\section{Theoretical Analysis on Two SAE Latents}
\label{sec:theory}

\subsection{Limitation of Per-sample Sparsity Objectives}
We analyze the behavior of $\ell_1$ and TopK objectives under the activation patterns defined in the unified problem formulation.

\begin{lemma}
Per-sample sparsity constraints, specifically $\ell_1$ regularization and TopK, strictly favor feature splitting and absorption ($\alpha \to 1$) over the ideal orthogonal decomposition ($\alpha = 0$) given equivalent reconstruction fidelity.
\end{lemma}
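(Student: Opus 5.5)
The plan is to compute the per-sample sparsity cost directly from the activation pattern in Table~\ref{array:unified_absorption} as a function of $\alpha$, with reconstruction held fixed. By construction of Eq.~\ref{eq:latent_mixture_refined}, the two latents reconstruct the same vector for every $\alpha$, so reconstruction fidelity is automatically equivalent. Samples $1,\dots,m$ contribute the same cost for all $\alpha$, since only $L_1'=L_1$ is active with value $z_1^{(i)}$. We may therefore restrict attention to samples $i=m+1,\dots,m+n$, where we take $z_1^{(i)},z_2^{(i)}>0$, as is natural for ReLU-type activations of a feature-bearing sample.

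\textbf{$\ell_1$ case.} For such a sample, the per-sample penalty is
\[
g_i(\alpha) = (1-\alpha)z_1^{(i)} + \sqrt{\alpha^2 (z_1^{(i)})^2 + (z_2^{(i)})^2}.
\]
At the endpoints we get $g_i(0)=z_1^{(i)}+z_2^{(i)}$ and $g_i(1)=\sqrt{(z_1^{(i)})^2+(z_2^{(i)})^2}$. Strict convexity of the $\ell_2$ norm, i.e.\ the strict case of Minkowski (Eq.~\ref{eq:minkowski}) for non-parallel vectors, gives $g_i(1)<g_i(0)$ whenever both $z$'s are positive. Summing over $i$ shows that the total $\ell_1$ penalty strictly prefers $\alpha=1$ to $\alpha=0$.

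To support the phrase ``$\alpha\to 1$'', I would also verify monotonicity. Differentiating gives
\[
g_i'(\alpha) = -z_1 + \frac{\alpha z_1^2}{\sqrt{\alpha^2 z_1^2 + z_2^2}},
\]
and this is negative on $[0,1]$ because $\alpha z_1 < \sqrt{\alpha^2 z_1^2 + z_2^2}$ whenever $z_2>0$. So $g_i$ is strictly decreasing and the $\ell_1$ penalty is minimized at $\alpha=1$.

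\textbf{TopK case.} The $\ell_0$ count is the relevant quantity here. For $\alpha<1$, samples $m+1,\dots,m+n$ have both latents active, each contributing $2$ to the count. At $\alpha=1$, $L_2'$ is parallel to $L_1'$, so reconstruction of those samples can be carried by the single merged latent $L_2'$ with activation $\sqrt{z_1^2+z_2^2}$, because the $L_1'$ coefficient $(1-\alpha)z_1$ vanishes. That drops the active count to $1$ per sample. Under a fixed budget $K$, this frees a slot, which either lowers the loss through another feature or is strictly preferred at equal loss. In a TopK sense, $\alpha=1$ therefore strictly dominates.

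\textbf{Main obstacle.} The main difficulty is making ``equivalent reconstruction fidelity'' rigorous at $\alpha=1$. There, $L_2'=L_1$, so the table's $L_2$ component $z_2^{(i)}$ is not literally reproduced along $L_2$. The table entries are only a bookkeeping of norms, not a faithful linear reconstruction. I would handle this by interpreting the formulation, as the paper does, as a comparison of activation magnitudes at matched reconstructed norm. I would state the endpoint comparison under that model, and add a remark that for $\alpha$ close to $1$ the strict inequalities persist by continuity. This is what justifies phrasing the result as a limit.
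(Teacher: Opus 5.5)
Your proposal is correct and follows the paper's proof. For $\ell_1$ you make the same per-sample comparison on samples $m+1,\dots,m+n$ via the strict Minkowski inequality $\alpha z_1^{(i)}+z_2^{(i)}>\sqrt{(\alpha z_1^{(i)})^2+(z_2^{(i)})^2}$, and for TopK the same active-count argument ($2$ versus $1$ latents). Your derivative check $g_i'(\alpha)<0$ is a small but genuine strengthening: the paper only shows $\ell_1(\alpha)<\ell_1(0)$ for each $\alpha\in(0,1]$ and then asserts that the optimizer maximizes $\alpha$. Your closing caveat about the table being only a norm bookkeeping is fair, but it contradicts your opening claim that the mixture definition of $L_2'$ preserves reconstruction ``by construction''. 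Drop that claim and take equal reconstruction as the lemma's hypothesis, exactly as the paper does.
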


\begin{proof}
\noindent \textbf{Case 1: The $\ell_1$ Penalty.} 
We compare the cumulative $\ell_1$ penalty for the ideal configuration versus the perturbed configuration, assuming equivalent SAE reconstruction.

(1) \textbf{Ideal Decomposition ($\alpha=0$):} In this state, the feature $L_1$ and the residual $L_2$ are orthogonal. The total $\ell_1$ norm over the batch is:
\begin{equation}
\label{eq:l1_ideal}
\ell_1(\alpha=0) = \sum_{i=1}^{m+n} z_{1}^{(i)} + \sum_{i=m+1}^{m+n} z_{2}^{(i)}.
\end{equation}

(2) \textbf{Perturbed Decomposition ($\alpha>0$):} Under splitting or absorption, the activation energy is redistributed. The total $\ell_1$ norm becomes:
\begingroup
\normalsize
\begin{align}
\begin{split}
\label{eq:l1_absorb}
\ell_1(\alpha>0) &= \sum_{i=1}^{m} z_{1}^{(i)} + \sum_{i=m+1}^{m+n} (1-\alpha) z_{1}^{(i)} \\
&+ \sum_{i=m+1}^{m+n} \sqrt{(\alpha z_1^{(i)})^{2} + (z_2^{(i)})^{2}}.
\end{split}
\end{align}
\endgroup
We apply the Minkowski inequality to the terms summing over indices $i \in \{m+1, \dots, m+n\}$. For any $\alpha > 0$:
\begingroup
\small
\begin{equation}
\sum_{i=m+1}^{m+n} \alpha z_{1}^{(i)} + \sum_{i=m+1}^{m+n} z_{2}^{(i)} 
> \sum_{i=m+1}^{m+n} \sqrt{(\alpha z_1^{(i)})^{2} + (z_2^{(i)})^{2}}.
\end{equation}
\endgroup
Subtracting the shared terms from Eq.~\ref{eq:l1_ideal} and Eq.~\ref{eq:l1_absorb}, it follows that $\ell_1(\alpha>0) < \ell_1(\alpha=0)$. The strict inequality holds for all $\alpha \in (0, 1]$. Consequently, the optimization process minimizes the global objective by maximizing $\alpha$, thereby driving the solution toward splitting or absorption to reduce the total norm while maintaining reconstruction fidelity.

\noindent \textbf{Case 2: The TopK Constraint.} 
The TopK objective enforces a hard constraint on the number of active latents, effectively minimizing the $\ell_0$ norm of the activation vector for a fixed reconstruction error tolerance. We evaluate the cardinality of the active set for the intersection samples $i \in \{m+1, \dots, m+n\}$.

(1) \textbf{Ideal Decomposition ($\alpha=0$):} 
The signal consists of two orthogonal non-zero components: the feature activation $z_1^{(i)}$ and the residual $z_2^{(i)}$. Since the basis vectors $L_1$ and $L_2$ are orthogonal, exact representation requires both to be active. Thus, the sparsity consumption is:
\begin{equation}
\|z^{(i)}\|_0 = 2 \quad \text{for } i \in \{m+1, \dots, m+n\}.
\end{equation}

(2) \textbf{Full Splitting or Absorption ($\alpha=1$):} 
Substituting $\alpha=1$ into the activation pattern defined in Table 2, the coefficient for the first latent becomes zero: $(1-\alpha)z_1^{(i)} = 0$. The second latent, $L_2'$, captures the entire vector magnitude $\sqrt{(z_1^{(i)})^2 + (z_2^{(i)})^2}$. As a result, the SAE represents the same vector space using a single active latent:
\begin{equation}
\|z^{(i)}\|_0 = 1 \quad \text{for } i \in \{m+1, \dots, m+n\}.
\end{equation}

By reducing the active set from 2 latents to 1, the split configuration ($\alpha=1$) saves the sparsity budget. This creates a strong pressure on the optimization: the TopK constraint pushes the SAEs to use latents to represent mixed feature directions rather than maintaining atomic features, as this saves capacity within the $k$-latent budget to reconstruct other features and lower the global reconstruction loss.\end{proof}

\subsection{Mitigating Splitting via Cross-Sample Consistency}
Previous analysis shows that per-sample objectives ($\ell_1$ and TopK) cannot distinguish between atomic and split features. In contrast, the Minkowski inequality~\citep{Gruber1979} (Eq.\ref{eq:minkowski}) for $p=2$ offers a strict convexity condition to reverse this preference. Since the sum of norms for separate vectors is always greater than the norm of their sum, minimizing the sum of $\ell_2$ norms across the batch dimension, i.e. $\|Z_{:,1}\|_2 + \|Z_{:,2}\|_2$, encourages SAEs to consolidate shared semantic feature into a single latent.

We analyze this mechanism using the unified formulation in Table~\ref{array:unified_absorption}. We define a regularization term $\mathcal{L}_{\text{pair}}$ as the sum of the batch-norms for the two split latents:
\begingroup
\small
\begin{equation}
\begin{split}
\mathcal{L}_{\text{pair}}(\alpha) &= \|Z_{:,1}\|_2 + \|Z_{:,2}\|_2 \\
&= \sqrt{\sum_{i=1}^{m} (z_{1}^{(i)})^2 + \sum_{i=m+1}^{m+n} ((1-\alpha)z_{1}^{(i)})^2} \\
&+ \sqrt{\sum_{i=m+1}^{m+n} (\alpha z_1^{(i)})^2 + \sum_{i=m+1}^{m+n} (z_2^{(i)})^2}.
\end{split}
\end{equation}
\endgroup
To check if minimizing this term suppresses splitting, we calculate its gradient with respect to the splitting factor $\alpha$. The derivative $\frac{\partial \mathcal{L}_{\text{pair}}}{\partial \alpha}$ shows that the loss increases monotonically with $\alpha$ (meaning the penalty reduces splitting) as long as the following condition holds (derivation in Appendix~\ref{appx:C^2R_proof}):
\begin{equation}
\alpha \geq \frac{1}{\sqrt{\frac{\sum_{i=1}^{m} (z_{1}^{(i)})^{2}}{\sum_{i=m+1}^{m+n} (z_{2}^{(i)})^{2}}} + 1 }.
\label{eq:feature_disp_condition}
\end{equation}
Empirical results from recent literature support this condition for practical SAE training. \citet{leasksparse} find that split latents retain high cosine similarity with their parent latents, implying the feature component magnitude far exceeds the residual ($\|z_1^{(i)}\| \gg \|z_2^{(i)}\|$). Additionally, \citet{chanin2024absorption} note large frequency gaps between parent and child features (e.g., $P(f_0)=0.25$ vs $P(f_1)=0.05$), which means the cumulative energy of the primary feature dominates the residual:
\begin{equation}
\sum_{i=1}^{m} (z_{1}^{(i)})^{2} \gg \sum_{i=m+1}^{m+n} (z_{2}^{(i)})^{2}.
\label{eq:final_condition}
\end{equation}
Under these settings, the right-hand side of Eq.~\ref{eq:feature_disp_condition} approaches zero. As a result, $\mathcal{L}_{\text{pair}}$ increases monotonically with $\alpha$ in the relevant domain. This confirms that a cross-sample $\ell_2$ penalty on redundant pairs theoretically ensures the consolidation of semantically related activations into a single consistent latent, mitigating both splitting and absorption. 

We empirically verify the condition in Eq.~\ref{eq:feature_disp_condition} on our trained 65{,}536-latent SAEs using a 4M-token test set from SAEBench. As shown in Appendix~\ref{app:eq13_verification}, the condition is satisfied in 88.1\% of absorption pairs ($N{=}4{,}555$), and the median ratio of the left-hand side to the right-hand side of Eq.~\ref{eq:final_condition} is 81.56. The remaining 11.9\% of violated pairs correspond to marginal cases with low absorption coefficients, where neither the parent nor the child feature carries strong signal. This confirms that the theoretical guarantee holds in the vast majority of practically relevant cases, though we note its conditional nature.

\begin{figure*}[t] 
    \centering
    \begin{subfigure}[b]{0.24\textwidth}
        \centering
        \includegraphics[width=\linewidth]{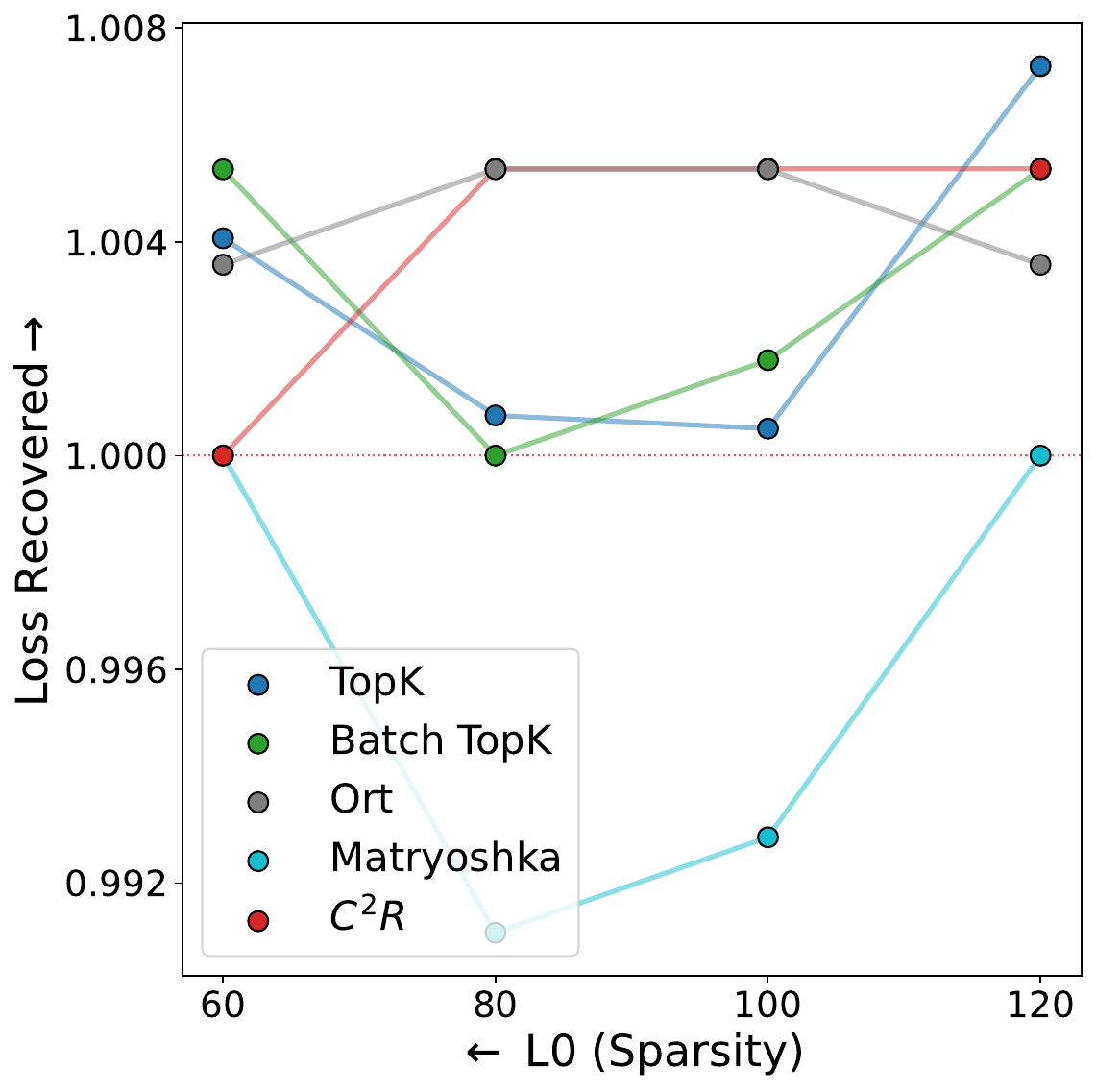} 
        \caption{Loss Recovered}
        \label{fig:main_experiment_a}
    \end{subfigure}
    \hfill 
    \begin{subfigure}[b]{0.24\textwidth}
        \centering
        \includegraphics[width=\linewidth]{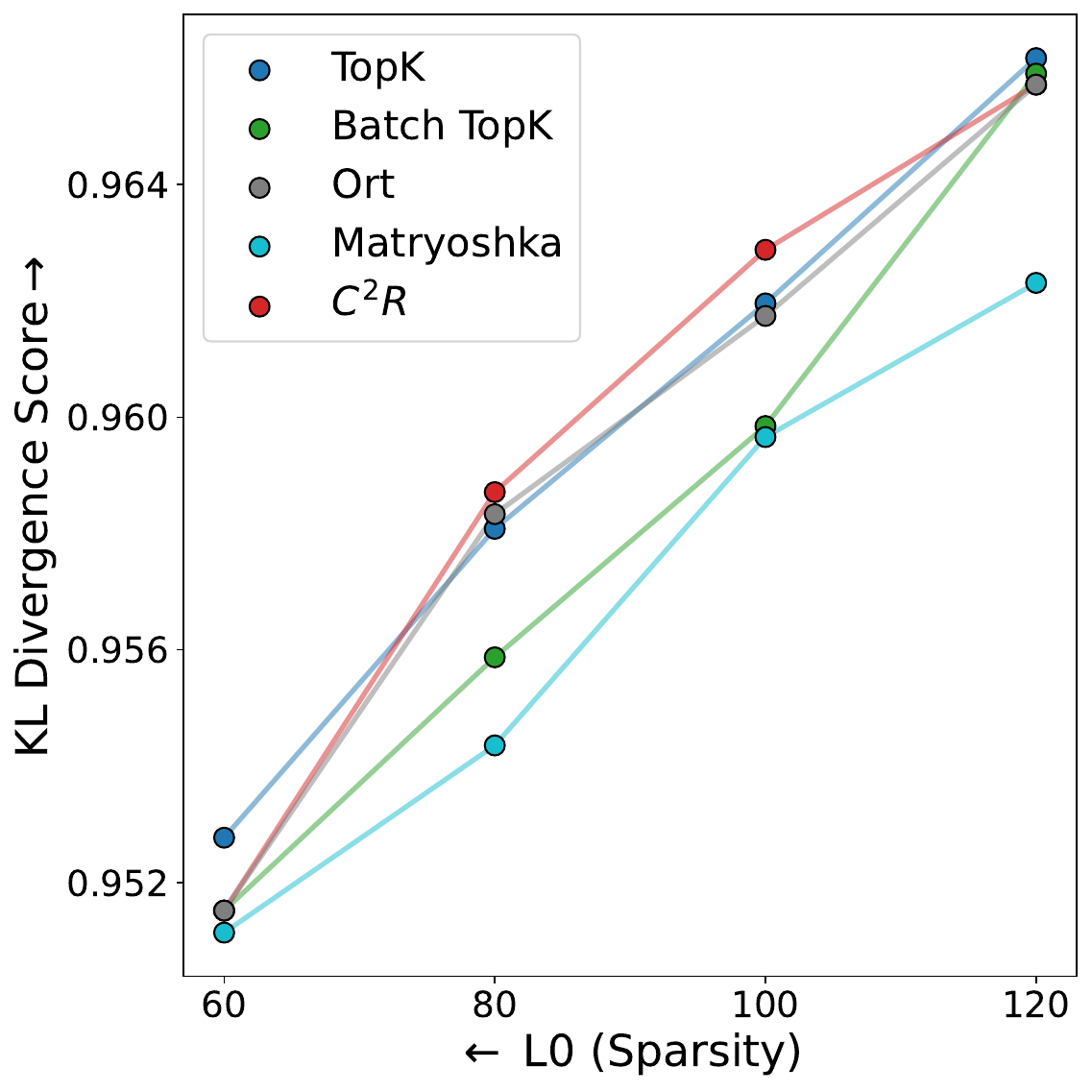}
        \caption{KL Divergence Score}
        \label{fig:main_experiment_b}
    \end{subfigure}
    \hfill
    \begin{subfigure}[b]{0.24\textwidth}
        \centering
        \includegraphics[width=\linewidth]{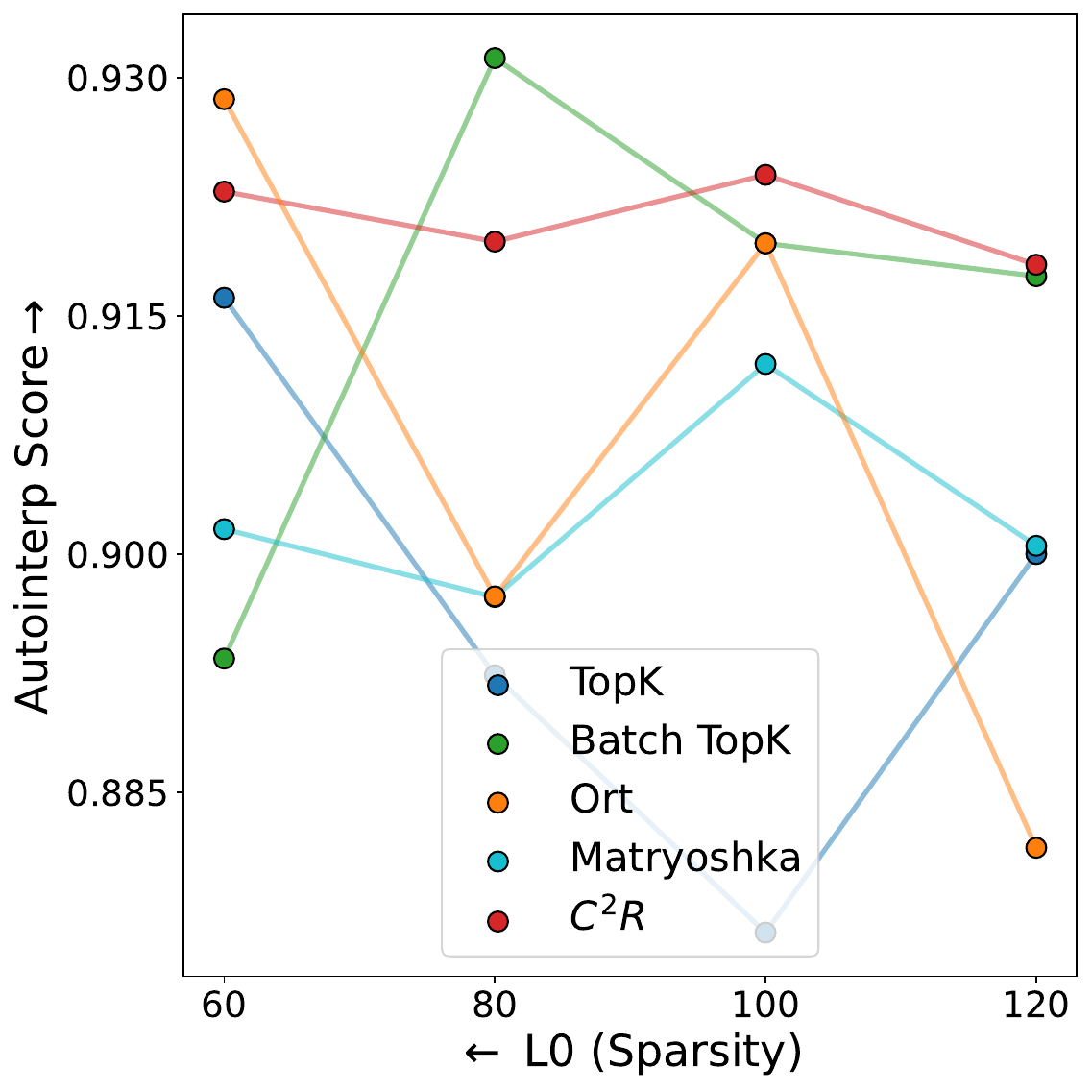}
        \caption{Autointerp}
        \label{fig:main_experiment_c}
    \end{subfigure}
    \hfill
    \begin{subfigure}[b]{0.24\textwidth}
        \centering
        \includegraphics[width=\linewidth]{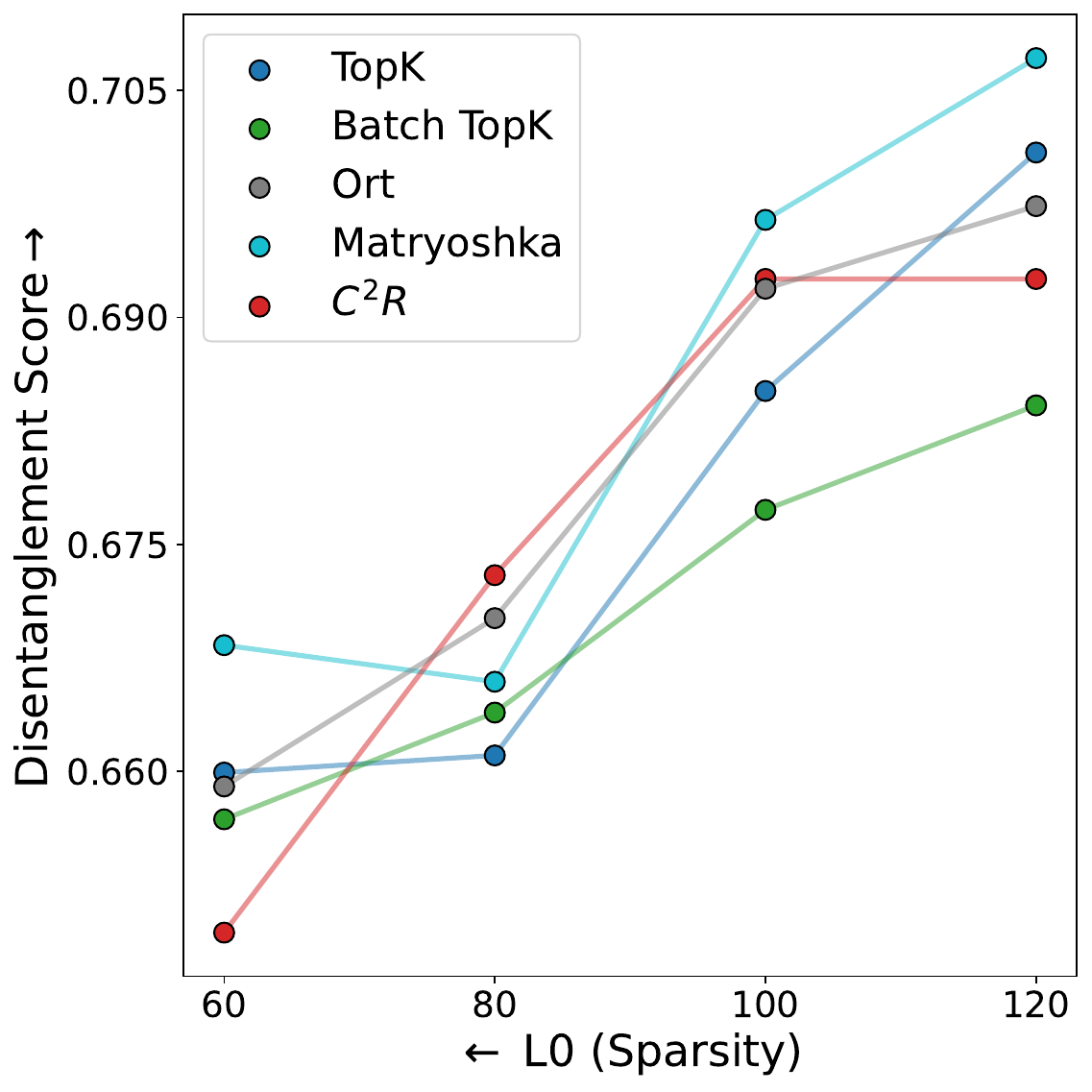}
        \caption{Disentanglement Score}
        \label{fig:main_experiment_d}
    \end{subfigure}
    
    \caption{\textbf{Quantitative comparison of SAE performance across different sparsity levels.} 
(a) and (b) evaluate \textbf{reconstruction fidelity} using Cross-Entropy Loss and KL Divergence at the LLM output layer, respectively, (c) assesses SAE latent \textbf{interpretability} using the Autointerp score, and (d) measures the extent to which real-world features are disentangled into steerable latents.
Notably, C$^2$R-enhanced Batch TopK SAE does not exhibit performance degradation compared to the vanilla Batch TopK baseline. Instead, it maintains competitive or superior results, particularly in KL Divergence and Autointerp scores, demonstrating that our proposed C$^2$R constraint preserves these important capabilities.}
    \label{fig:main_experiment}
\end{figure*}

\looseness=-1
\section{Cross-Sample Consistency Regularization}
Following the analysis in Section~\ref{sec:theory}, we introduce \textbf{C$^2$R} (\underline{\textbf{C}}ross-sample \underline{\textbf{C}}onsistency \underline{\textbf{R}}egularization). Minkowski inequality helps merge redundant features, but applying it to a large dictionary requires a careful approach. Here, we extend the pairwise analysis to the full SAE dictionary and examine the gradient dynamics that enable C$^2$R to enforce both cross-sample consistency and latent orthogonality.

\subsection{Generalizing from Pairwise to Multiple Latents}
The derivation in Section~\ref{sec:theory} used a simple system with two latents for one parent feature and a child feature. However, standard SAEs have thousands of latents, and most represent distinct concepts. If we minimize the sum of norms across random pairs without selection, we would force independent features to merge, which causes feature collapse and reduces the SAEs' ability to resolve semantic differences.

We therefore need the regularization to be selective. It should only penalize latent pairs that look like split fragments or absorbed variations, while leaving independent features alone. Previous work on feature splitting~\citep{chanin2024absorption} and orthogonal constraints~\citep{korznikov2025ort} shows that redundant latents usually have similar decoder weight directions. In contrast, distinct features tend to be nearly orthogonal in the high-dimensional space.

Based on this, we use the cosine similarity of decoder weights to detect redundancy. For each latent $i$, we find its nearest neighbor $j^*(i)$ in the decoder space:
\begin{equation}
    j^*(i) = \operatorname*{arg\,max}_{j \neq i} \, \langle \hat{w}_i, \hat{w}_j \rangle, \quad \text{where } \hat{w} = \frac{w}{\|w\|_2}.
\end{equation}
We then define the C$^2$R loss by weighting the pairwise norm penalty with the squared rectified cosine similarity $\rho_{i, j^*(i)}^2 = \operatorname{ReLU}(\langle \hat{w}_i, \hat{w}_{j^*(i)} \rangle)^2$:
\begin{equation}
    \mathcal{L}_{\text{C$^2$R}}(X) = \frac{1}{k} \sum_{i=1}^{k} \rho_{i, j^*(i)}^2 \cdot \underbrace{\left( \|Z_{:,i}\|_2 + \|Z_{:,j^*(i)}\|_2 \right)}_{S_{i, j^*(i)}}.
\end{equation}
This weight acts as a gate. For distinct features where $\hat{w}_i \perp \hat{w}_j$, $\rho^2$ is zero, so the consistency constraint does not apply. For redundant features with high alignment, $\rho^2$ is large, which fully applies the regularization.

The final training objective adds this regularization to the standard SAE loss, which includes reconstruction and sparsity terms:
\begin{equation}
    \mathcal{L}(X) = \mathcal{L}_{\text{SAE}}(X) + \lambda_{\text{C$^2$R}} \mathcal{L}_{\text{C$^2$R}}(X),
\end{equation}
where $\lambda_{\text{C$^2$R}}$ controls the weight of the cross-sample consistency term.

\subsection{Gradient Analysis and Implicit Orthogonality}
We can better understand C$^2$R by looking at its gradient dynamics. The loss function is the product of a geometric alignment term ($\rho^2$) and an activation magnitude term ($S_{i, j^*(i)}$). The gradient with respect to the model parameters $\theta$ follows the product rule $\nabla (AB) = B \nabla A + A \nabla B$:
\begin{equation}
    \nabla_\theta \mathcal{L}_{\text{C$^2$R}} \propto \underbrace{\rho^2 \cdot \nabla_\theta S_{i, j^*(i)}}_{\text{Consistency Gradient}} + \underbrace{S_{i, j^*(i)} \cdot \nabla_\theta (\rho^2)}_{\text{Orthogonality Gradient}}.
\end{equation}
This shows that C$^2$R creates two simultaneous forces during optimization.

\textbf{1. Consistency Pressure ($\rho^2 \nabla S$).} The first term minimizes the sum of norms, scaled by the similarity weight $\rho^2$. As shown in Section~\ref{sec:theory}, this uses the Minkowski inequality to push the splitting factor $\alpha$ toward 0. It merges the activation energy of redundant latents into a single one, which helps fix feature splitting and absorption.

\textbf{2. Implicit Orthogonality Pressure ($S \nabla \rho^2$).} The second term minimizes cosine similarity between decoder weights to encourage feature orthogonality, similar to the goals of OrtSAE~\citep{korznikov2025ort}. Unlike OrtSAE that typically applies a uniform penalty to all selected max-cosine pairs, our method scales the penalty by $S_{i, j^*(i)}$ (the sum of feature activations). This allows C$^2$R to adjust regularization based on feature frequency and magnitude. Strong, high-frequency features (large $S$) are subject to stricter orthogonality pressure to prevent redundancy. Conversely, for newly initialized or rare latents (small $S$), aggressive orthogonality enforcement can be counterproductive, potentially pushing them away from valid directions before they stabilize. By scaling the gradient with activation magnitude, C$^2$R prevents such disruption, allowing developing latents to converge naturally. This mechanism promotes orthogonality while adapting the regularization strength to each feature's convergence state.
We further clarify the implementation-level relationship between C$^2$R and OrtSAE in Appendix~\ref{appx:ortsae_relation}.

\begin{figure*}[t] 
    \centering
    \begin{subfigure}[b]{0.33\textwidth}
        \centering
        \includegraphics[width=\linewidth]{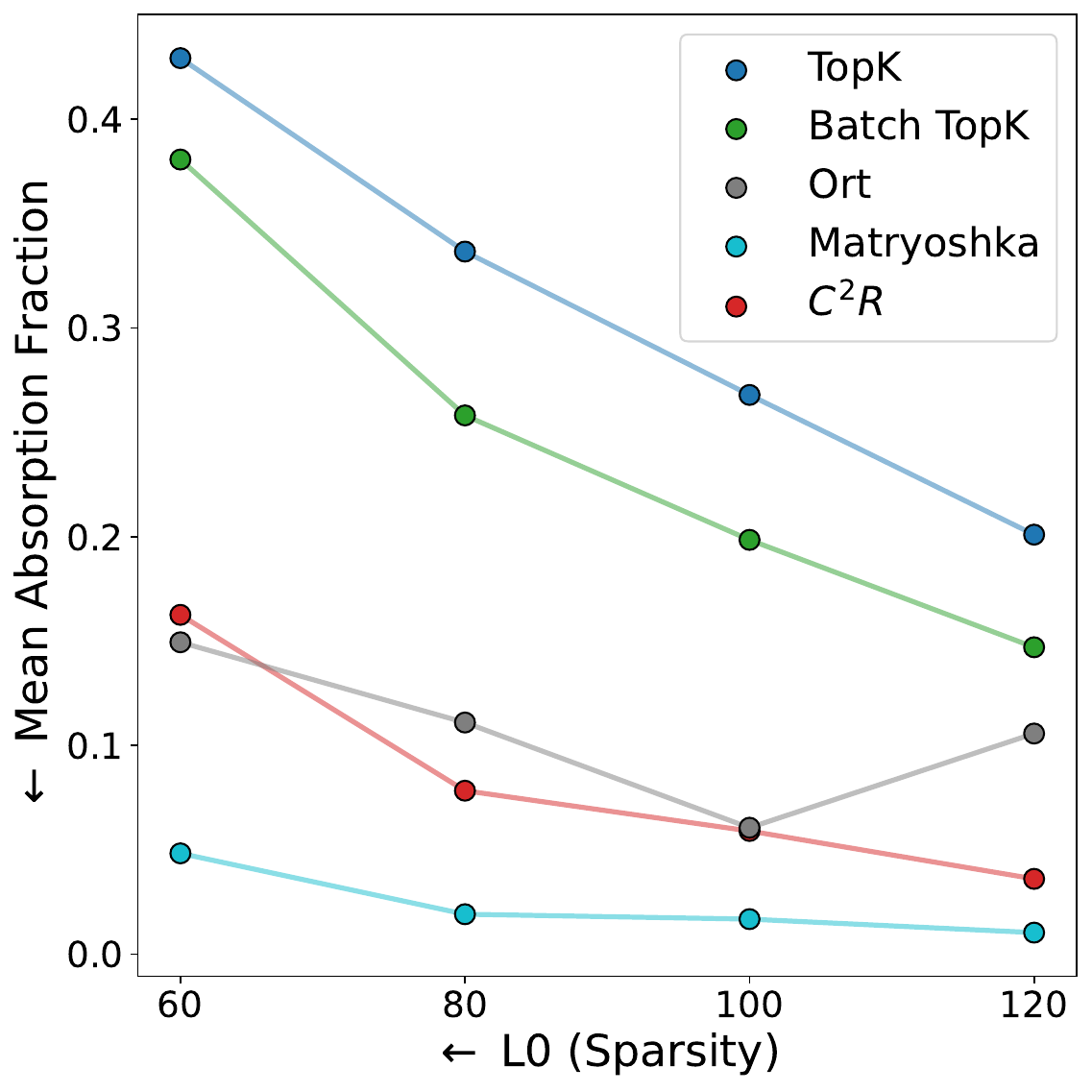} 
        \caption{Feature absorption}
        \label{fig:hierarchy_experiment_a}
    \end{subfigure}
    \hfill 
    \begin{subfigure}[b]{0.33\textwidth}
        \centering
        \includegraphics[width=\linewidth]{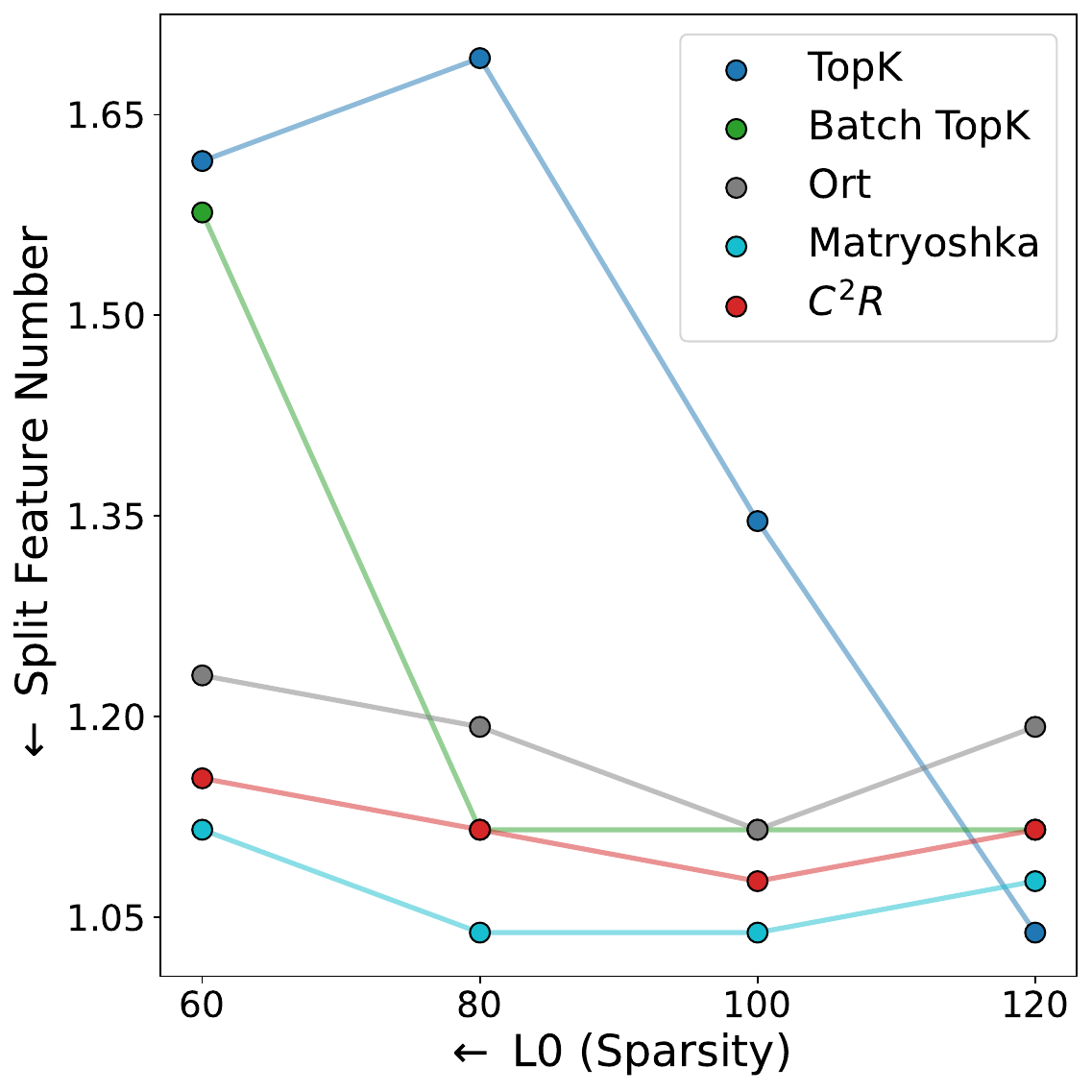}
        \caption{Feature splitting}
        \label{fig:hierarchy_experiment_b}
    \end{subfigure}
    \hfill
    \begin{subfigure}[b]{0.33\textwidth}
        \centering
        \includegraphics[width=\linewidth]{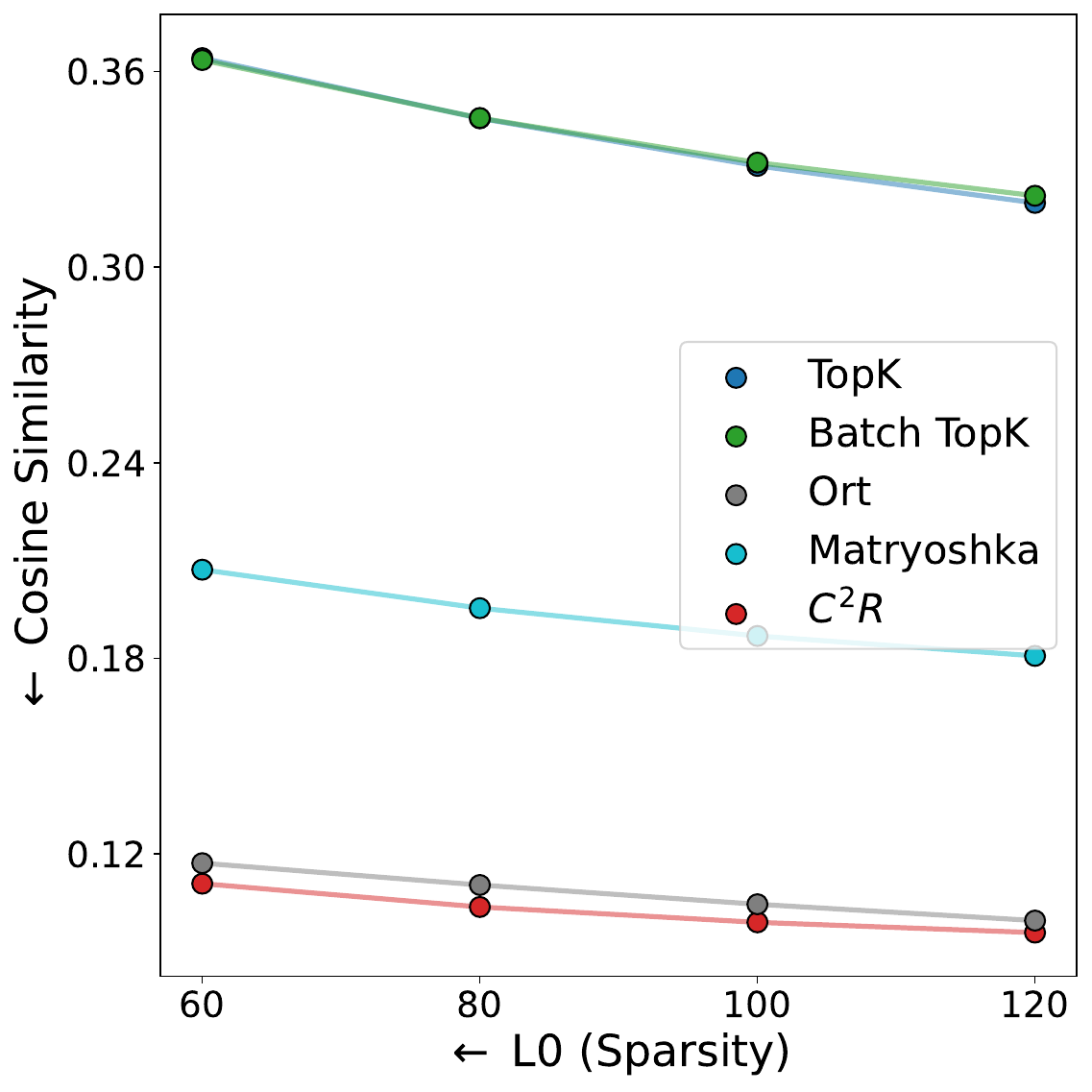}
        \caption{Feature composition}
        \label{fig:hierarchy_experiment_c}
    \end{subfigure}
    \caption{\textbf{Analysis of feature structural metrics.} We compare feature absorption, splitting, and composition across different sparsity levels. Among methods that maintain high reconstruction fidelity (i.e., excluding Matryoshka SAEs), the proposed C$^2$R constraint achieves the lowest rates of feature absorption and splitting. Furthermore, it achieves the optimal performance in feature decomposition, consistently yielding the lowest cosine similarity to ensure more atomic features.}
    \label{fig:hierarchy_experiment}
\end{figure*}

\section{EXPERIMENTS}
\label{sec:experiments}

\subsection{Baselines}
Baselines include four different SAE architectures: TopK SAEs~\citep{gaoscaling}, Batch TopK SAEs~\citep{leasksparse}, Matryoshka SAEs~\citep{bussmannlearning}, and OrtSAEs~\citep{korznikov2025ort}, as introduced in~\ref{sec:background}. For the main results in Figures~\ref{fig:main_experiment} and \ref{fig:hierarchy_experiment}, C$^2$R uses Batch TopK as its base architecture to ensure a fair comparison with OrtSAE and Matryoshka SAE, both of which also build on Batch TopK. We evaluate performance by iterating over sparsity levels of $k \in \{60,80,100,120\}$, ensuring a fair comparison between baselines and C$^2$R-enhanced Batch TopK SAE under equivalent sparsity conditions.\looseness=-1. We run baselines and implement C$^2$R on top of a public codebase~\citep{marks2024dictionary_learning}, and the trained SAEs are evaluated with SAEBench~\citep{karvonensaebench}. Matryoshka SAEs use 5 layers, with the sizes of its 4 sub-SAEs set to \{1/32, 1/8, 1/4, 1/2\} of the total latents. 

\subsection{Experiment Settings}
We conduct systematic experiments on \text{Gemma-2-2B}~\citep{gemma_2024}\footnote{Gemma Terms of Use} to validate the effectiveness of C$^2$R. Specifically, we sample a 500M-token subset from the \text{OpenWebText} dataset~\citep{Gokaslan2019OpenWeb}\footnote{Creative Commons Zero v1.0 Universal} and use it to train a series of SAEs on the residual stream activations of the 12th layer of \text{Gemma-2-2B}. Each SAE has 65,536 latents, which is approximately 28 times the model's residual dimension, making it easier to observe feature absorption and feature splitting~\citep{karvonensaebench}. We performed a hyperparameter sweep for $\lambda_{\text{C}^2\text{R}}$ over the set $\{0.1, 0.5, 1, 5, 10\}$. We selected $\lambda_{\text{C}^2\text{R}} = 5$, as it represents the maximal regularization strength that does not degrade reconstruction fidelity. All SAEs are optimized with \text{Adam} using a learning rate of $2 \times 10^{-4}$, batch size of 2,048, and context length of 1,024.

\textbf{Computational Efficiency.}\quad
Computing the pairwise cosine similarity for a large dictionary (e.g., $k=65,536$) imposes a quadratic $O(k^2)$ computational complexity and substantial memory overhead. To ensure computational feasibility and maintain a fair comparison with the state-of-the-art baseline, we adopt the efficient optimization strategy used in OrtSAEs~\citep{korznikov2025ort}. Specifically, we employ a block-wise computation strategy with a chunk size of 8,192 and compute the consistency regularization term every 5 training steps, scaling the coefficient $\lambda_{\text{C$^2$R}}$ accordingly. This approach reduces the overhead to negligible levels while preserving performance (detailed in Appendix~\ref{appx:efficiency}).

\subsection{Metrics}
\looseness=-1
To holistically assess the relative performance of SAEs when integrating C$^2$R, our evaluation utilizes a comprehensive set of seven key metrics. These metrics, implemented using the code framework from SAEBench~\citep{karvonensaebench}, cover four areas: \textbf{reconstruction fidelity}, \textbf{feature hierarchy}, \textbf{interpretability}, and \textbf{disentanglement}. Specifically, our evaluation consists of six key metrics: \textbf{Loss Recovered}, \textbf{KL Div. Score}, \textbf{AutoInterp}, \textbf{Split Num}, \textbf{Absorption Rate}, \textbf{Composition}, and \textbf{Disentanglement}. Detailed description of these metrics is in~\ref{appx:metrics_desc}.

\subsection{Reconstruction Fidelity}
As shown in Figure~\ref{fig:main_experiment}, integrating C$^2$R preserves reconstruction fidelity of the backbone SAEs. Figure~\ref{fig:main_experiment_a} reports the Loss Recovery metric (details in Appendix~\ref{appx:metrics_desc}), where a value $\ge 1$ indicates that the SAE-reconstructed LLM activations yield a cross-entropy loss lower than or equal to the original LLM. We observe that across the tested sparsity levels, only Matryoshka SAEs exhibit performance loss, while other methods maintain full recovery. Figure~\ref{fig:main_experiment_b} displays the KL Divergence Score, which measures the shift in the LLM's output logit distribution. Higher scores correspond to smaller deviations from the original distribution. The results demonstrate that SAEs trained with the C$^2$R constraint maintain high reconstruction fidelity, strictly adhering to the original model's behavior.

\subsection{Interpretability}

Following the evaluation framework of~\citep{pauloautomatically}, we assess the interpretability of all SAEs using the \textit{AutoInterp} metric. 
The results in Figure~\ref{fig:main_experiment_c} show that adding C$^2$R does not significantly affect interpretability. For each SAE, we sampled 128 latents and constructed prompts based on their activations over a 2M-token input. 
We prompt an LLM explainer to generate concise and comprehensive textual descriptions for each latent from 15 observed samples, and prompt an independent LLM judge predicted whether each latent would activate on 15 unseen test samples. 
The detailed prompts used for both the LLM explainer and LLM judger, as well as the instructions provided to human annotators, are included in Appendix~\ref{appx:prompts}.

We employ GPT-5-mini~\citep{gpt5mini2024} as the LLM judge and validate its reliability via a user study. GPT-5-mini achieved a 97.3\% match rate with humans and a Pearson $r$ of 0.74, confirming its suitability as a proxy evaluator for automated interpretability assessment. Details are in Appendix~\ref{appx:user_study}. To further verify that the results are not sensitive to the choice of judge model, we re-evaluate all AutoInterp scores using GPT-5. The per-method Pearson correlations between GPT-5-mini and GPT-5 scores range from 0.81 to 0.94 (all $p < 0.001$), confirming that the interpretability rankings are consistent across judge models. Full results are in Table~\ref{tab:judge_consistency}.

\subsection{Disentanglement}
We evaluate disentanglement using the RAVEL benchmark~\citep{huang-etal-2024-ravel}, which employs interchange interventions to determine if specific SAE latents causally control individual attributes such as continent or gender. The metric averages cause and isolation scores to measure how well the SAE governs a target concept without affecting unrelated ones. As illustrated in Figure~\ref{fig:main_experiment_d}, Batch TopK SAEs trained with the C$^2$R constraint achieve performance comparable to the vanilla baseline. This demonstrates that our regularization effectively preserves a disentangled and manipulable latent space.

\subsection{Feature Splitting and Feature Absorption}
\looseness=-1
Figures~\ref{fig:hierarchy_experiment_a} and~\ref{fig:hierarchy_experiment_b} illustrate the evaluation of feature absorption and splitting. Our proposed C$^2$R constraint effectively mitigates both issues compared to the vanilla Batch TopK baseline and outperforms the orthogonal constraint in OrtSAE. Although Matryoshka SAEs achieve lower absorption and splitting scores, this improvement comes at the expense of reconstruction fidelity, as evidenced in Figure~\ref{fig:main_experiment_a} and \ref{fig:main_experiment_b}. Therefore, among methods that maintain the original model's performance, the C$^2$R constraint strikes the best balance, achieving the lowest levels of absorption and splitting while preserving latent quality and reconstruction capabilities.

\subsection{Feature Composition}
Following~\citet{bussmannlearning}, we quantify feature composition by measuring the average maximum cosine similarity between latent vectors. High similarity implies that multiple latents represent overlapping information, indicating a lack of atomicity. As shown in Figure~\ref{fig:hierarchy_experiment_c}, our approach achieves the optimal performance on this metric. The C$^2$R constraint yields significantly lower cosine similarity compared to vanilla Batch TopK and Matryoshka SAEs, and marginally outperforms OrtSAE. This result confirms that our method effectively minimizes redundancy, producing a set of highly distinct and atomic features.

\subsection{Robustness and Generalization}\label{sec:robustness}

We conduct a series of additional experiments to verify the robustness and generalization of C$^2$R across random seeds, model scales, layers, and training data. Detailed results are provided in Appendix~\ref{app:robustness}.

\textbf{Statistical validation.}\quad
To assess the stability of our main results, we perform 5 independent training runs with different random seeds at sparsity level $k{=}100$ on Gemma-2-2B layer 12. As shown in Table~\ref{tab:stat_valid}, C$^2$R consistently achieves the best feature composition and absorption with narrow confidence intervals, confirming that the reported improvements are statistically robust.

\textbf{Scaling to larger models.}\quad
We evaluate C$^2$R on two current-generation models, Qwen3-8B and Llama-3-8B, both at layer 20 with sparsity level $k{=}100$. Tables~\ref{tab:qwen3} and \ref{tab:llama3} show that C$^2$R maintains its effectiveness at the 8B scale: it achieves the lowest composition and absorption while preserving reconstruction fidelity, demonstrating that our method generalizes beyond the Gemma-2-2B setting.

\textbf{Cross-layer consistency.}\quad
Beyond the layer 12 experiments in the main evaluation, we additionally train SAEs on Gemma-2-2B layer 20 (Table~\ref{tab:gemma_l20}). C$^2$R continues to achieve optimal or near-optimal structural metrics at a deeper layer, confirming cross-layer consistency.

\textbf{Sensitivity to training data.}\quad
We investigate the effect of training data scale and composition by (1) extending the training corpus from 500M to 1B tokens on OpenWebText, and (2) training on The Pile~\citep{gao2020pile}, a structurally diverse dataset spanning academic, internet, prose, and dialogue domains. As shown in Tables~\ref{tab:1b_owt} and \ref{tab:pile}, C$^2$R consistently achieves optimal or near-optimal structural feature metrics in both settings, confirming that our method is not sensitive to the scale or composition of the training data.

\subsection{Ablation Study}\label{sec:ablation}

We ablate the key design choices of C$^2$R and study the sensitivity to the regularization strength $\lambda_{\text{C}^2\text{R}}$. All experiments are conducted on Gemma-2-2B layer 12 at sparsity level $k{=}100$. Detailed results are provided in Appendix~\ref{app:ablation}.

\textbf{Component ablation.}\quad
We evaluate two variants: (1) \textbf{NoNNR}, which removes the nearest-neighbor restriction and applies the consistency penalty to all latent pairs; and (2) \textbf{NoRCG}, which removes the ReLU cosine gate and applies the penalty without directional selectivity. As shown in Table~\ref{tab:ablation}, removing either component imposes an overly aggressive merging constraint. While this severely penalizes feature absorption, it does so at the unacceptable cost of significantly degrading reconstruction fidelity and inflating composition scores. This demonstrates that the cautious, gated constraints of C$^2$R are necessary to mitigate feature pathologies without destroying dictionary utility.

\textbf{Hyperparameter sensitivity.}\quad
We sweep $\lambda_{\text{C}^2\text{R}} \in \{0.1, 0.5, 1, 5, 10\}$ to understand its effect on the trade-off between reconstruction and structural metrics (Table~\ref{tab:lambda}). As $\lambda_{\text{C}^2\text{R}}$ increases, composition and absorption decrease monotonically, while reconstruction fidelity remains stable up to $\lambda_{\text{C}^2\text{R}}{=}5$ and begins to degrade at $\lambda_{\text{C}^2\text{R}}{=}10$. We select $\lambda_{\text{C}^2\text{R}}{=}5$ as the default, as it maximizes the reduction in feature composition and absorption without sacrificing reconstruction quality.

\section{Compatibility with Other SAE Architectures}\label{sec:compatibility}

A key design goal of C$^2$R is to serve as a general-purpose regularizer that can be applied on top of any SAE architecture. To verify this, we evaluate C$^2$R when integrated with different backbone architectures beyond the Batch TopK used in our main experiments. All experiments are conducted on Gemma-2-2B layer 12 at sparsity level $k{=}100$. Detailed results are provided in Appendix~\ref{app:compatibility}.

\textbf{Integration with TopK and OrtSAE.}\quad
We apply C$^2$R to both the standard TopK and OrtSAE backbones. As shown in Table~\ref{tab:backbone}, C$^2$R consistently and substantially improves feature structural metrics regardless of the base architecture, without degrading reconstruction fidelity. Notably, TopK + C$^2$R achieves a significant reduction in Composition and Absorption, demonstrating that C$^2$R's benefits are not tied to the Batch TopK backbone but arise from the cross-sample consistency mechanism itself.

\textbf{Integration with AbsTopK.}\quad
To evaluate whether C$^2$R adapts to bi-directional encoder architectures, we apply it on top of AbsTopK, which uses signed activations. As shown in Table~\ref{tab:backbone}, C$^2$R effectively improves feature hierarchy in this signed setting while preserving reconstruction fidelity. The ReLU cosine gate remains appropriate because feature absorption and splitting imply cosine similarity $> 0$ between the involved decoder directions. By targeting only positive similarities, C$^2$R avoids collateral damage to non-pathological negative correlations, ensuring training stability in signed encoder settings.

\subsection{Downstream Causal Intervention Tasks}\label{sec:downstream}

We further evaluate the practical utility of learned SAE features on two downstream causal intervention tasks from SAEBench. Spurious Correlation Removal (SCR) measures the ability to remove spurious correlations by ablating relevant SAE features, and Targeted Probe Perturbation (TPP) measures the precision of causal interventions via targeted feature perturbation. As shown in Table~\ref{tab:downstream} in Appendix~\ref{app:downstream}, C$^2$R outperforms TopK, Batch TopK, and Ort on both tasks and achieves comparable performance to Matryoshka, indicating that the improved feature structure transfers to downstream causal applications.

\section{Conclusion}
In this work, we introduced C$^2$R, a theoretically grounded constraint aimed at improving feature hierarchy in sparse autoencoders. By leveraging the Minkowski inequality, our approach provides a rigorous guarantee against feature splitting and absorption. Extensive empirical evaluations show that incorporating the C$^2$R constraint significantly enhances the structural quality of the learned dictionary, reducing feature composition and minimizing redundancy. Importantly, our approach preserves reconstruction fidelity, latent interpretability, and disentanglement capabilities on par with the baselines. These results highlight the effectiveness of the cross-sample consistency regularization in addressing the trade-off between feature atomicity and model fidelity, offering a robust solution for large language model analysis.

\section{Limitations}
The theoretical guarantee in Eq.~\ref{eq:feature_disp_condition} is conditional and holds in 88.1\% of absorption pairs in our verification (Appendix~\ref{app:eq13_verification}). A fully unconditional guarantee remains an open problem. Although we extend our evaluation to Qwen3-8B and Llama-3-8B, we have not verified scalability to larger models or diverse architectures such as Mixture-of-Experts. Our AutoInterp evaluation uses the proprietary GPT-5-mini as the judge model. We verify high consistency with GPT-5 and human evaluations, but this still introduces a reproducibility constraint. In addition, C$^2$R may suppress legitimate polysemanticity when distinct features have moderately aligned decoder directions. We find that C$^2$R maintains over 99\% dictionary utilization (Appendix~\ref{appx:dead_features}), but the interaction with polysemantic features needs further study.

\section*{Acknowledgements}
This work was supported by the National Natural Science Foundation of China (NSFC) (No. U24A20253, NO. 62476279, NO. 92470205, NO. U2436209), Scientific Research Innovation Capability Support Project for Young Faculty, Major Innovation \& Planning Interdisciplinary Platform for the “Double-First Class” Initiative, Renmin University of China, the Fundamental Research Funds for the Central Universities, and the Research Funds of Renmin University of China No. 24XNKJ18. Supported by fund for building world-class universities (disciplines) of Renmin University of China and Public Computing Cloud, Renmin University of China.

\section*{Impact Statement}
This work advances the methodology of sparse dictionary learning for neural networks. By improving the fidelity of feature extraction, we aim to enable a more granular understanding of large language model internals. Such interpretability is essential for auditing model behavior, identifying latent failure modes, and verifying safety properties prior to deployment. However, we acknowledge that deeper insights into model representations can also be leveraged to improve model efficiency or steerability, potentially accelerating the development of powerful systems and amplifying the societal risks associated with their deployment.


\bibliography{example_paper}

@misc{korznikov2025ort,
      title={OrtSAE: Orthogonal Sparse Autoencoders Uncover Atomic Features}, 
      author={Anton Korznikov and Andrey Galichin and Alexey Dontsov and Oleg Rogov and Elena Tutubalina and Ivan Oseledets},
      year={2025},
      eprint={2509.22033},
      archivePrefix={arXiv},
      primaryClass={cs.LG},
      url={https://arxiv.org/abs/2509.22033}, 
}

@misc{Gokaslan2019OpenWeb,
    title={OpenWebText Corpus},
    author={Gokaslan, Aaron and Cohen, Vanya and Pavlick, Ellie and Tellex, Stefanie},
    howpublished={\url{http://Skylion007.github.io/OpenWebTextCorpus}},
    year={2019}
}

@article{gao2020pile,
  title={The pile: An 800gb dataset of diverse text for language modeling},
  author={Gao, Leo and Biderman, Stella and Black, Sid and Golding, Laurence and Hoppe, Travis and Foster, Charles and Phang, Jason and He, Horace and Thite, Anish and Nabeshima, Noa and others},
  journal={arXiv preprint arXiv:2101.00027},
  year={2020}
}

@inproceedings{huang-etal-2024-ravel,
    title = "{RAVEL}: Evaluating Interpretability Methods on Disentangling Language Model Representations",
    author = "Huang, Jing  and
      Wu, Zhengxuan  and
      Potts, Christopher  and
      Geva, Mor  and
      Geiger, Atticus",
    editor = "Ku, Lun-Wei  and
      Martins, Andre  and
      Srikumar, Vivek",
    booktitle = "Proceedings of the 62nd Annual Meeting of the Association for Computational Linguistics (Volume 1: Long Papers)",
    month = aug,
    year = "2024",
    address = "Bangkok, Thailand",
    publisher = "Association for Computational Linguistics",
    url = "https://aclanthology.org/2024.acl-long.470/",
    doi = "10.18653/v1/2024.acl-long.470",
    pages = "8669--8687"
}

@inproceedings{huben2023sparse,
  title={Sparse autoencoders find highly interpretable features in language models},
  author={Huben, Robert and Cunningham, Hoagy and Smith, Logan Riggs and Ewart, Aidan and Sharkey, Lee},
  booktitle={The Twelfth International Conference on Learning Representations},
  year={2023}
}

@article{bricken2023monosemanticity,
   title={Towards Monosemanticity: Decomposing Language Models With Dictionary Learning},
   author={Bricken, Trenton and Templeton, Adly and Batson, Joshua and Chen, Brian and Jermyn, Adam and Conerly, Tom and Turner, Nick and Anil, Cem and Denison, Carson and Askell, Amanda and Lasenby, Robert and Wu, Yifan and Kravec, Shauna and Schiefer, Nicholas and Maxwell, Tim and Joseph, Nicholas and Hatfield-Dodds, Zac and Tamkin, Alex and Nguyen, Karina and McLean, Brayden and Burke, Josiah E and Hume, Tristan and Carter, Shan and Henighan, Tom and Olah, Christopher},
   year={2023},
   journal={Transformer Circuits Thread},
   note={https://transformer-circuits.pub/2023/monosemantic-features/index.html}
}

@article{rajamanoharan2024improving,
  title={Improving dictionary learning with gated sparse autoencoders},
  author={Rajamanoharan, Senthooran and Conmy, Arthur and Smith, Lewis and Lieberum, Tom and Varma, Vikrant and Kram{\'a}r, J{\'a}nos and Shah, Rohin and Nanda, Neel},
  journal={arXiv preprint arXiv:2404.16014},
  year={2024}
}

@article{rajamanoharan2024jumping,
  title={Jumping ahead: Improving reconstruction fidelity with jumprelu sparse autoencoders},
  author={Rajamanoharan, Senthooran and Lieberum, Tom and Sonnerat, Nicolas and Conmy, Arthur and Varma, Vikrant and Kram{\'a}r, J{\'a}nos and Nanda, Neel},
  journal={arXiv preprint arXiv:2407.14435},
  year={2024}
}

@inproceedings{gaoscaling,
  title={Scaling and evaluating sparse autoencoders},
  author={Gao, Leo and la Tour, Tom Dupre and Tillman, Henk and Goh, Gabriel and Troll, Rajan and Radford, Alec and Sutskever, Ilya and Leike, Jan and Wu, Jeffrey},
  booktitle={The Thirteenth International Conference on Learning Representations}
}

@article{olshausen1997sparse,
  title={Sparse coding with an overcomplete basis set: A strategy employed by V1?},
  author={Olshausen, Bruno A and Field, David J},
  journal={Vision research},
  volume={37},
  number={23},
  pages={3311--3325},
  year={1997},
  publisher={Elsevier}
}

@inproceedings{ferrandoknow,
  title={Do I Know This Entity? Knowledge Awareness and Hallucinations in Language Models},
  author={Ferrando, Javier and Obeso, Oscar Balcells and Rajamanoharan, Senthooran and Nanda, Neel},
  booktitle={The Thirteenth International Conference on Learning Representations}
}

@article{chanin2024absorption,
  title={A is for absorption: Studying feature splitting and absorption in sparse autoencoders},
  author={Chanin, David and Wilken-Smith, James and Dulka, Tom{\'a}{\v{s}} and Bhatnagar, Hardik and Golechha, Satvik and Bloom, Joseph},
  journal={arXiv preprint arXiv:2409.14507},
  year={2024}
}

@inproceedings{bussmannlearning,
  title={Learning Multi-Level Features with Matryoshka Sparse Autoencoders},
  author={Bussmann, Bart and Nabeshima, Noa and Karvonen, Adam and Nanda, Neel},
  booktitle={Forty-second International Conference on Machine Learning}
}

@inproceedings{leasksparse,
  title={Sparse Autoencoders Do Not Find Canonical Units of Analysis},
  author={Leask, Patrick and Bussmann, Bart and Pearce, Michael T and Bloom, Joseph Isaac and Tigges, Curt and Al Moubayed, Noura and Sharkey, Lee and Nanda, Neel},
  booktitle={The Thirteenth International Conference on Learning Representations}
}

@inproceedings{zhao-etal-2025-steering,
    title = "Steering Knowledge Selection Behaviours in {LLM}s via {SAE}-Based Representation Engineering",
    author = "Zhao, Yu  and
      Devoto, Alessio  and
      Hong, Giwon  and
      Du, Xiaotang  and
      Gema, Aryo Pradipta  and
      Wang, Hongru  and
      He, Xuanli  and
      Wong, Kam-Fai  and
      Minervini, Pasquale",
    editor = "Chiruzzo, Luis  and
      Ritter, Alan  and
      Wang, Lu",
    booktitle = "Proceedings of the 2025 Conference of the Nations of the Americas Chapter of the Association for Computational Linguistics: Human Language Technologies (Volume 1: Long Papers)",
    month = apr,
    year = "2025",
    address = "Albuquerque, New Mexico",
    publisher = "Association for Computational Linguistics",
    url = "https://aclanthology.org/2025.naacl-long.264/",
    doi = "10.18653/v1/2025.naacl-long.264",
    pages = "5117--5136",
    ISBN = "979-8-89176-189-6"
}

@article{yeo2025understanding,
  title={Understanding Refusal in Language Models with Sparse Autoencoders},
  author={Yeo, Wei Jie and Prakash, Nirmalendu and Neo, Clement and Lee, Roy Ka-Wei and Cambria, Erik and Satapathy, Ranjan},
  journal={arXiv preprint arXiv:2505.23556},
  year={2025}
}

@article{wang2025resa,
  title={Resa: Transparent Reasoning Models via SAEs},
  author={Wang, Shangshang and Asilis, Julian and Akg{\"u}l, {\"O}mer Faruk and Bilgin, Enes Burak and Liu, Ollie and Fu, Deqing and Neiswanger, Willie},
  journal={arXiv preprint arXiv:2506.09967},
  year={2025}
}

@misc{lan2025quantifyingfeaturespaceuniversality,
      title={Quantifying Feature Space Universality Across Large Language Models via Sparse Autoencoders}, 
      author={Michael Lan and Philip Torr and Austin Meek and Ashkan Khakzar and David Krueger and Fazl Barez},
      year={2025},
      eprint={2410.06981},
      archivePrefix={arXiv},
      primaryClass={cs.LG},
      url={https://arxiv.org/abs/2410.06981}, 
}

@article{gemma_2024,
    title={Gemma},
    url={https://www.kaggle.com/m/3301},
    DOI={10.34740/KAGGLE/M/3301},
    publisher={Kaggle},
    author={Gemma Team},
    year={2024}
}

@misc{marks2024dictionary_learning,
   title = {dictionary\_learning\_demo},
   author = {Adam Karvonen},
   year = {2024},
   howpublished = {\url{https://github.com/adamkarvonen/dictionary_learning_demo}},
}

@inproceedings{karvonensaebench,
  title={SAEBench: A Comprehensive Benchmark for Sparse Autoencoders in Language Model Interpretability},
  author={Karvonen, Adam and Rager, Can and Lin, Johnny and Tigges, Curt and Bloom, Joseph Isaac and Chanin, David and Lau, Yeu-Tong and Farrell, Eoin and McDougall, Callum Stuart and Ayonrinde, Kola and others},
  booktitle={Forty-second International Conference on Machine Learning}
}

@inproceedings{pauloautomatically,
  title={Automatically Interpreting Millions of Features in Large Language Models},
  author={Paulo, Gon{\c{c}}alo Santos and Mallen, Alex Troy and Juang, Caden and Belrose, Nora},
  booktitle={Forty-second International Conference on Machine Learning}
}

@online{gpt5mini2024,
  author = {OpenAI},
  title = {GPT-5 System Card},
  year = {2025},
  url = {https://openai.com/index/gpt-5-system-card/},
  urldate = {2025-09-30}
}

@article{pearson1895vii,
  title={VII. Note on regression and inheritance in the case of two parents},
  author={Pearson, Karl},
  journal={proceedings of the royal society of London},
  volume={58},
  number={347-352},
  pages={240--242},
  year={1895},
  publisher={The Royal Society London}
}

@Inbook{Gruber1979,
author="Gruber, Peter M.",
editor="T{\"o}lke, J{\"u}rgen
and Wills, J{\"o}rg M.",
title="Geometry of numbers",
bookTitle="Contributions to Geometry: Proceedings of the Geometry-Symposium held in Siegen June 28, 1978 to July 1, 1978",
year="1979",
publisher="Birkh{\"a}user Basel",
address="Basel",
pages="186--225",
isbn="978-3-0348-5765-9",
doi="10.1007/978-3-0348-5765-9_7",
url="https://doi.org/10.1007/978-3-0348-5765-9_7"
}
\bibliographystyle{icml2026}

\newpage

\appendix
\onecolumn

\section{C$^{2}$R Promotes Cross-sample Consistency}
\label{appx:C^2R_proof}
When feature absorption occurs, the C$^{2}$R Loss is: 
\begingroup
\small
\begin{equation}
\begin{split}
\mathcal{L}_{\text{C$^2$R}}(X) &=
\sqrt{\sum_{i=1}^{m} (z_{1}^{(i)})^2 + \sum_{i=m+1}^{m+n} ((1-\alpha)z_{1}^{(i)})^2} \\
&+ \sqrt{\sum_{i=m+1}^{m+n} (\alpha z_1^{(i)})^2 + \sum_{i=m+1}^{m+n} (z_2^{(i)})^2}.
\end{split}
\end{equation}
\endgroup
Taking the partial derivative with respect to $\alpha$ yields:
\begingroup
\small
\begin{equation}
\begin{split}
\frac{\partial \mathcal{L}_{\text{C$^2$R}}(X)}{\partial \alpha} &=\frac{(\alpha-1)\sum_{i=m+1}^{m+n}(z_{1}^{(i)})^{2}}{
\sqrt{\sum_{i=1}^{m} (z_{1}^{(i)})^2 + \sum_{i=m+1}^{m+n} ((1-\alpha)z_{1}^{(i)})^2}} \\
&+ \frac{\alpha\sum_{i=m+1}^{m+n}(z_{1}^{(i)})^{2}}{\sqrt{\sum_{i=m+1}^{m+n} (\alpha z_1^{(i)})^2 + \sum_{i=m+1}^{m+n} (z_2^{(i)})^2}}.
\end{split}
\end{equation}
\endgroup
The correct loss should be proportional to $\alpha\in [0,1]$:
\begin{equation}
\frac{\partial \mathcal{L}_{\text{C$^2$R}}(X)}{\partial \alpha} \geq 0.
\end{equation}
Introducing the shorthand notation:
\begin{equation}
\begin{split}
A&=\sum_{i=1}^{m} (z_{1}^{(i)})^2\\
B&=\sum_{i=m+1}^{m+n} (z_2^{(i)})^2\\
C&=\sum_{i=m+1}^{m+n}(z_{1}^{(i)})^{2}
\end{split}
\end{equation}
The inequality simplifies to:
\begin{equation}
\frac{\alpha C}{\sqrt{\alpha^2 C + B}} \geq \frac{(1-\alpha)C}{
\sqrt{A + (1-\alpha)^2 C}}.
\end{equation}
The final form is:
\begin{equation}
\alpha \geq \frac{1}{\sqrt{\frac{\sum_{i=1}^{m}(z_{1}^{(i)})^{2}}{\sum_{i=m+1}^{m+n}(z_{2}^{(i)})^{2}}}+1}
\end{equation}

\section{Efficient Implementation Details}
\label{appx:efficiency}

Applying the C$^2$R constraint naively requires computing the cosine similarity between all pairs of decoder weight vectors, resulting in a $k \times k$ similarity matrix. For a dictionary size of $k=65,536$, this operation requires $O(k^2)$ memory and computation, which significantly slows down training and increases VRAM usage.

To address this, we implement two engineering optimizations following the methodology of OrtSAE~\citep{korznikov2025ort}:

\paragraph{Chunk-wise Approximation.}
Instead of searching for the nearest neighbor $j^*(i)$ across the entire dictionary, we randomly permute the feature indices at each step and partition the dictionary into smaller blocks (chunks). For a dictionary size $k$ and chunk size $C$, we partition the latents into $N = k/C$ chunks. The nearest neighbor search and loss computation are then restricted to within each chunk.
\begin{equation}
    j^*(i) \approx \operatorname*{arg\,max}_{j \in \text{Chunk}(i), j \neq i} \, \langle \hat{w}_i, \hat{w}_j \rangle.
\end{equation}
In our experiments, we use a chunk size of $C=8,192$. This reduces the complexity from $O(k^2)$ to $O(C \cdot C)$. Since high-cosine similarity features (redundant pairs) are rare and the permutation is randomized at every step, the probability of a redundant pair falling into the same chunk accumulates rapidly over training steps, ensuring the regularization remains effective.

\paragraph{Periodic Updates.}
To further reduce the computational overhead, we compute the C$^2$R loss and its gradients only every $T$ training steps rather than at every iteration. To maintain the same effective regularization strength over time, we scale the regularization coefficient $\lambda_{\text{C$^2$R}}$ by the period $T$:
\begin{equation}
    \mathcal{L}_{\text{step } t}(X) = \mathcal{L}_{\text{SAE}}(X) + (\mathbbm{1}_{t \text{ mod } T = 0} \cdot T \cdot \lambda_{\text{C$^2$R}}) \mathcal{L}_{\text{C$^2$R}}(X).
\end{equation}
We set $T=5$ for all experiments of OrtSAEs and our approach. This configuration aligns our training cost with other SAEs' training times, adding only negligible overhead (slightly more than the findings in OrtSAE, where overhead was reduced to $<10\%$).

\section{Prompts for LLMs and Instructions for Human Annotators}
\label{appx:prompts}

This appendix presents the detailed prompts used in the interpretability evaluation.  
The prompts were designed to elicit consistent reasoning from both LLM-based and human evaluators.  
Two types of LLMs were employed: an \textit{Explainer} to describe latent semantics and a \textit{Predictor} (or Judge) to estimate latent activation likelihood.  
For comparison, human annotators followed analogous instructions.

\subsection{LLM Explainer Prompt}
An example of the prompts used to generate textual descriptions for each latent activation is shown in Table~\ref{tab:llm_explain}.

\begin{table*}[htbp]
\centering
\begin{tabular}{|p{\textwidth}|}
\hline
\textbf{Prompt example for the LLM explainer} \\ \hline\hline
We're studying neurons in a neural network. Each neuron activates on some particular word/words/substring/concept in a short document. The activating words in each document are indicated with <<{{token}}[act:{{activation}}]>>.\\ \\
We will give you a list of ACTIVATE documents, where the neuron fires, ordered by strength. Look at the marked parts of the ACTIVATE documents and summarize in a single sentence what the neuron is activating on. Try not to be overly specific or overly broad. Your explanation should cover most or all activating words. Pay attention to things like capitalization and punctuation if relevant. Keep the explanation as short and simple as possible, limited to 30 words or less. Omit punctuation and formatting. Some examples: "This neuron activates on the word 'knows' in rhetorical questions", and "This neuron activates on verbs related to decision-making and preferences", and "This neuron activates on the substring 'Ent' at the start of words", and "This neuron activates on text about government economic policy".\newline

The relevant documents are given below:\newline

ACTIVATE (1).  see he was enjoying the other shapes too – the<< round[act:55.5]>> bowl and basket and the books underneath them, the
\newline
ACTIVATE (2).  The tube may be<< cylindrical[act:16.75]>> (or conical) with<< circular[act:55.0]>>, rectangular or any desired cross section.↵By
\newline
ACTIVATE (3).  the factors affecting the appearance of impact craters↵The<< circular[act:53.0]>> features so obvious on the Moon's surface are
\newline
ACTIVATE (4).  example, the simple cylindrical case which cylinder has a<< circular[act:52.5]>> cross section, will be considered in detail. If
\newline
ACTIVATE (5).  when Picasaweb closed. They consist of a<< circular[act:50.0]>> emitter Psurrounded by a ring shaped base N
\newline
ACTIVATE (6).  home.↵A mosquito bite appears as an itchy<< round[act:50.0]>>, red, or pink skin bump. It'
\newline
ACTIVATE (7).  of the large square sew-on. Take the<< round[act:48.25]>> sew-on and glue it on the left side
\newline
ACTIVATE (8).  multiple locations to accommodate various connection sizes and elevations.<< Round[act:47.75]>> or rectangular shapes available per design specifications. Knock-
\newline
ACTIVATE (9). . Tie and suspend with gold thread from either our<< round[act:44.25]>> hoop or a stick of your choice.↵Hang
\newline
ACTIVATE (10). bles has been played over the centuries with everything from<< rounded[act:41.75]>> sea pebbles to fruit pits, today the game is
\newline
ACTIVATE (11).  discount on Flashflight.com's most popular<< circular[act:41.25]>> and spherical objects. From now until March 1
\newline
ACTIVATE (12).  front extension, stone corbelling under eaves,<< circular[act:41.25]>> light in gable peak, slender turret with Christian cross
\newline
ACTIVATE (13). piles of rock (called ejecta) around the<< circular[act:41.25]>> hole as well as↵bright streaks of target material
\newline
ACTIVATE (14).  waterproof back and an outer back with 16<< round[act:41.0]>> openings. Manufactured in 1967,
\newline
ACTIVATE (15).  A showcases her gorgeous slender body with swollen breasts,<< round[act:40.75]>> butt, and slender toes on the veranda.↵
\newline
\\
\hline
\end{tabular}
\caption{Prompt example for LLM explainer to explain a latent based on its activations.}
\label{tab:llm_explain}
\end{table*}

\subsection{LLM Predictor Prompt}
An example of the prompts used by the LLM judge (the predictor) to decide whether the described latent would activate for each unseen test sample is shown in Table~\ref{tab:llm_predict}.

\begin{table*}[htbp]
\centering
\begin{tabular}{|p{\textwidth}|}
\hline
\textbf{Prompt example for the LLM predictor} \\ \hline\hline
We're studying neurons in a neural network. Each neuron activates on some particular word/words/substring/concept in a short document. You will be given a short explanation of what this neuron activates for, and then be shown 15 example sequences in random order. You will have to return a comma-separated list of the examples where you think the neuron should activate at least once, on ANY of the words or substrings in the document. For example, your response might look like "1, 2, 6, 9, 12". Try not to be overly specific in your interpretation of the explanation. If you think there are no examples where the neuron will activate, you should just respond with "None". You should include nothing else in your response other than comma-separated numbers or the word "None" - this is important.\newline

Here is the explanation: this neuron fires on words describing round or circular shapes including round circular rounded and cylindrical.\newline

Here are the examples:\newline

1. in South Africa • Uganda) · Asia (in China • India • Myanmar • Pakistan • Taiwan • Japan\newline
2. ized was either beheaded or shot at point blank range." more >>↵A Syrian mother and widow was tortured\newline
3.  method' anyone can do that but getting the right mindset to succeed. This is something most traders simply cannot\newline
4.  Facebook Be Fixed?↵CMS Wire (May 24, 2012) - Facebook\newline
5. . Finished in a weathered brown and accented with a circular polished silver bezel. The metal dial has polished silver\newline
6.  when a customer has changed his or her mind about a transaction, or when an error has occurred, the\newline
7. . Several of you have reached out to us and to our colleagues across the Administration. You've warned\newline
8.  crater is that↵you cannot see it. Its circular structure is nearly a kilometer below the↵surface and\newline
9.  sound crazy? Okay?↵DW:I’m just going to tell you the truth.↵THE\newline
10.  am I missing some key information here?<eos>The eleventh round of 2020 Monster Energy Super\newline
11.  awesome Flashflight Light-Up Flying Discs are circular, and our equally saucy Meteorlight LED Light\newline
12.  rather than doubling Defense on Dodge↵Strength ••, Brawl •↵Add Brawl rather than doubling Defense on Dodge\newline
13.  achieve a perfectly snug fit. Lastly the grain is circular-grained, after which the stone will not move\newline
14. . Instead he drew a Dalek with two big round holes in it, and a guy catching a baseball\newline
15. abb, Sean McDermott, Kevin Kolb). If the Eagles are waiting for a Packers assistant, the best\newline
\\
\hline
\end{tabular}
\caption{Prompt example for LLM judger to predict latent activations based on its explanation generated by the LLM explainer.}
\label{tab:llm_predict}
\end{table*}

\subsection{Human Annotator Instruction}
Human annotators were provided with the latent's same activating samples and unseen samples as the LLM explainer and the LLM predictor. An example of the instructions is shown in Table~\ref{tab:human_instructions}.

\begin{table*}[htbp]
\centering
\begin{tabular}{|p{\textwidth}|}
\hline
\textbf{Instruction example for human annotators} \\ \hline\hline
~\newline
We're studying neurons in a neural network. Each neuron activates on some particular word/words/substring/concept in a short document. The activating words in each document are highlighted.
\newline
We will give you a list of ACTIVATE documents (where the neuron fires, ordered by strength), please look at the marked parts of the ACTIVATE documents. Summarize in a single sentence what the neuron is activating on. Try not to be overly specific or overly broad. Your explanation should cover all activating words. Pay attention to things like capitalization and punctuation if relevant. Keep the explanation as short and simple as possible, limited to 30 words or less. Omit punctuation and formatting. 
\newline Some examples: "This neuron activates on the word 'knows' in rhetorical questions", and "This neuron activates on verbs related to decision-making and preferences", and "This neuron activates on the substring 'Ent' at the start of words", and "This neuron activates on text about government economic policy".

The relevant documents are given below:\newline

ACTIVATE (1). TR\colorbox[rgb]{0.988,0.988,0.996}{{\strut}UDA} \colorbox[rgb]{0.522,0.612,0.980}{{\strut}passes} \colorbox[rgb]{0.718,0.773,0.988}{{\strut}into} \colorbox[rgb]{0.710,0.765,0.988}{{\strut}your} \colorbox[rgb]{0.902,0.922,0.996}{{\strut}breast} \colorbox[rgb]{0.718,0.773,0.988}{{\strut}milk}\colorbox[rgb]{0.608,0.682,0.984}{{\strut}.} \colorbox[rgb]{0.859,0.886,0.992}{{\strut}Continue} \colorbox[rgb]{0.522,0.612,0.980}{{\strut}to} \colorbox[rgb]{0.302,0.435,0.973}{{\strut}take} prednis\colorbox[rgb]{0.984,0.988,0.996}{{\strut}olone} \colorbox[rgb]{0.435,0.541,0.976}{{\strut}regularly} \colorbox[rgb]{0.655,0.722,0.984}{{\strut}until} \colorbox[rgb]{0.765,0.808,0.988}{{\strut}your} \colorbox[rgb]{0.737,0.788,0.988}{{\strut}doctor} \colorbox[rgb]{0.576,0.655,0.980}{{\strut}tells} \colorbox[rgb]{0.561,0.647,0.980}{{\strut}you} \colorbox[rgb]{0.690,0.749,0.984}{{\strut}to}\newline ACTIVATE (2). 0.4\% vs. You \colorbox[rgb]{0.671,0.733,0.984}{{\strut}may} \colorbox[rgb]{0.588,0.667,0.980}{{\strut}take} \colorbox[rgb]{0.490,0.584,0.976}{{\strut}this} \colorbox[rgb]{0.353,0.478,0.973}{{\strut}medicine} \colorbox[rgb]{0.549,0.635,0.980}{{\strut}with} \colorbox[rgb]{0.831,0.863,0.992}{{\strut}or} \colorbox[rgb]{0.808,0.843,0.992}{{\strut}without} \colorbox[rgb]{0.878,0.902,0.992}{{\strut}meals}\colorbox[rgb]{0.820,0.855,0.992}{{\strut}.} Please once you are cured\newline ACTIVATE (3). ↵\colorbox[rgb]{0.949,0.957,0.996}{{\strut}How} the \colorbox[rgb]{0.690,0.749,0.984}{{\strut}interaction} \colorbox[rgb]{0.831,0.863,0.992}{{\strut}occurs}\colorbox[rgb]{0.875,0.898,0.992}{{\strut}:}↵\colorbox[rgb]{0.863,0.890,0.992}{{\strut}When} \colorbox[rgb]{0.698,0.753,0.984}{{\strut}these} \colorbox[rgb]{0.890,0.914,0.992}{{\strut}two} \colorbox[rgb]{0.400,0.514,0.976}{{\strut}medicines} \colorbox[rgb]{0.608,0.682,0.984}{{\strut}are} \colorbox[rgb]{0.682,0.745,0.984}{{\strut}taken} \colorbox[rgb]{0.467,0.569,0.976}{{\strut}together}\colorbox[rgb]{0.682,0.745,0.984}{{\strut},} cime\colorbox[rgb]{0.980,0.984,0.996}{{\strut}tidine} \colorbox[rgb]{0.541,0.627,0.980}{{\strut}may} \colorbox[rgb]{0.584,0.663,0.980}{{\strut}cause} \colorbox[rgb]{0.584,0.663,0.980}{{\strut}your} \colorbox[rgb]{0.765,0.808,0.988}{{\strut}body}\newline ACTIVATE (4). c.com.$<$eos$>$Serious. These \colorbox[rgb]{0.976,0.980,0.996}{{\strut}medicines} \colorbox[rgb]{0.584,0.663,0.980}{{\strut}may} \colorbox[rgb]{0.408,0.522,0.976}{{\strut}interact} \colorbox[rgb]{0.616,0.690,0.984}{{\strut}and} \colorbox[rgb]{0.718,0.773,0.988}{{\strut}cause} \colorbox[rgb]{0.757,0.804,0.988}{{\strut}very} \colorbox[rgb]{0.765,0.808,0.988}{{\strut}harmful} \colorbox[rgb]{0.757,0.804,0.988}{{\strut}effects}\colorbox[rgb]{0.937,0.949,0.996}{{\strut}.} \colorbox[rgb]{0.710,0.765,0.988}{{\strut}Contact} \colorbox[rgb]{0.710,0.765,0.988}{{\strut}your} \colorbox[rgb]{0.835,0.867,0.992}{{\strut}healthcare} \colorbox[rgb]{0.816,0.851,0.992}{{\strut}professional}
\newline
...
\newline
ACTIVATE (15).  bruising, \colorbox[rgb]{1.000,1.000,1.000}{{\strut}or} dark stools\colorbox[rgb]{0.859,0.886,0.992}{{\strut},} \colorbox[rgb]{0.784,0.824,0.988}{{\strut}contact} \colorbox[rgb]{0.855,0.882,0.992}{{\strut}your} \colorbox[rgb]{0.831,0.863,0.992}{{\strut}doctor} \colorbox[rgb]{0.808,0.847,0.992}{{\strut}right} \colorbox[rgb]{0.671,0.733,0.984}{{\strut}away}\colorbox[rgb]{0.725,0.776,0.988}{{\strut}.}\colorbox[rgb]{0.682,0.745,0.984}{{\strut}Your} \colorbox[rgb]{0.863,0.890,0.992}{{\strut}healthcare} \colorbox[rgb]{0.765,0.808,0.988}{{\strut}professionals} \colorbox[rgb]{0.725,0.776,0.988}{{\strut}may} already be \colorbox[rgb]{0.906,0.922,0.996}{{\strut}aware} \colorbox[rgb]{0.875,0.898,0.992}{{\strut}of} \colorbox[rgb]{0.847,0.875,0.992}{{\strut}this}
\newline

Based on your explanation of what this neuron activates for, please review the following 15 examples and indicate if you believe the neuron should activate at least once on ANY of the words or substrings within the document. Provide the corresponding text IDs. For instance, your response might look like "2, 3, 5, 6, 13". Avoid being overly specific in your interpretation of the explanation.

Here are the examples:
\newline

1.  off-the-wall in this first directorial effort from the 49-year-old Belgian\newline 2.  your organization.$<$eos$>$The Academy of Motion Picture Arts and Sciences which is best known for organizing the Oscars has\newline 3.  Realtors and the Mortgage Bankers Association.↵But this time, lobbyists are worried. That's because\newline 4.  considered a natural antihistamine. Valtrex is used to treat herpes zoster and herpes simplex and,
\newline

...
\newline

15.  of the stomach and intestines.↵Be sure to tell your doctor if you experience any of these side effects \newline
\\
\hline
\end{tabular}
\caption{Instruction example for human annotators to predict latent activation.}
\label{tab:human_instructions}
\end{table*}


\section{User Study Details}
\label{appx:user_study}

\begin{table}[t]
\renewcommand{\arraystretch}{1.1}
\centering
\begin{tabular}{|l|c|c|}
\hline
 & Prediction Match & Pearson $r$\\
\hline
Annotator 1 & 97.8\% & 0.74  \\
Annotator 2 & 96.7\% & 0.63  \\
Annotator 3 & 97.6\% & 0.86  \\
\hline
Average & 97.3\% & 0.74  \\
\hline
\end{tabular}
\caption{User study comparing GPT-5-mini~\citep{gpt5mini2024} and human annotators in the automated interpretability task. The table reports the match rate and Pearson correlation~\citep{pearson1895vii} between human- and LLM-derived AutoInterp scores.}
\label{tab:user_study}
\end{table}

We recruited three human annotators with high-school-level English proficiency, who replicated the explainer–judge process on 30 latents sampled from Batch TopK SAE and its C$^2$R-enhanced variant at layer 12 of \textit{Gemma-2-2B}, covering five L$_0$ settings. Each annotator produced 450 activation predictions. 
The probability of agreement between human annotators and GPT-5-mini, as well as the Pearson correlation coefficient~\citep{pearson1895vii} between human- and LLM-derived scores, are summarized in Table~\ref{tab:user_study}. 
GPT-5-mini achieved a \textbf{97.3\%} match rate with humans and a Pearson $r$ of \textbf{0.74}, confirming its suitability as a proxy evaluator for automated interpretability assessment.

The annotators are recruited from the university, and the compensation was set according to the standard payment guidelines for on-campus research participation.

To verify that our results are not sensitive to the specific judge model, we re-evaluate the AutoInterp scores of all methods using GPT-5 and compute the per-method Pearson correlation with the original GPT-5-mini scores. As shown in Table~\ref{tab:judge_consistency}, all correlations exceed 0.81 with $p < 0.001$, confirming high consistency.

\begin{table}[t]
\renewcommand{\arraystretch}{1.1}
\centering
\begin{tabular}{|l|c|c|c|}
\hline
Method & GPT-5-mini & GPT-5 & Pearson $r$ \\
\hline
C$^2$R     & 0.9239 & 0.9218 & 0.94 \\
Batch TopK & 0.9208 & 0.9161 & 0.89 \\
Matryoshka & 0.9120 & 0.9031 & 0.89 \\
Ort        & 0.9196 & 0.9153 & 0.81 \\
TopK       & 0.8762 & 0.8735 & 0.94 \\
\hline
\end{tabular}
\caption{Consistency between GPT-5-mini and GPT-5 as AutoInterp judges. All Pearson correlations are significant ($p < 0.001$).}
\label{tab:judge_consistency}
\end{table}

\section{Detailed Metrics Description}
\label{appx:metrics_desc}
The following six key metrics are used to evaluate the performance of SAEs integrated with C$^2$R:

\begin{itemize}
    \item \textbf{Loss Recovered}
    This metric is the primary measure of \textbf{reconstruction fidelity}. It quantifies the degree to which an SAE can preserve the original language model's Next-Token Prediction performance after its internal activations are reconstructed. It is defined as:
    $
    \text{Loss Recovered} = \frac{(H^{*}-H_{0})}{(H_{orig}-H_{0})}
    $
    where $H_{orig}$ is the original cross-entropy loss, $H^{*}$ is the loss after replacement with SAE-reconstructed activations, and $H_{0}$ is the loss after zero-ablating the original activations. A higher value indicates better reconstruction fidelity.
    
    \item \textbf{KL Div. Score~\citep{gaoscaling}}
    As a complementary measure of reconstruction quality, this metric assesses how effectively the SAE's reconstruction recovers the model's output distribution from a zero-ablated baseline. It is a normalized score that quantifies the reduction in Kullback-Leibler (KL) divergence between the output logits and the original model's logits distribution ($P_{orig}$). The score is calculated as:
    $
    \text{KL Div. Score} = 1 - \frac{D_{KL}(P_{SAE} \parallel P_{orig})}{D_{KL}(P_{ablated} \parallel P_{orig})}
    $
    where $D_{KL}(P_{ablated} \parallel P_{orig})$ is the KL divergence when the activation is zero-ablated, and $D_{KL}(P_{SAE} \parallel P_{orig})$ is the KL divergence when the activation is replaced by the SAE reconstruction. This score is bounded between $0$ and $1$, where a higher score signifies superior reconstruction performance relative to the zero-ablated state.
    
    \item \textbf{AutoInterp~\citep{pauloautomatically}}
    This metric evaluates the human-understandability of learned latents using LLMs. It operates in two stages: an LLM generates a feature description based on activating inputs, and another LLM judge uses this description to predict latent activation on new sequences. The prediction accuracy serves as the AutoInterp score.

    \item \textbf{Split Num~\citep{chanin2024absorption}}
    A diagnostic metric focusing on feature splitting. This metric serves as a proxy for the granularity and non-redundancy of the learned latents. It is measured by identifying a single high-level concept (e.g., all tokens starting with a specific letter) and counting the minimum number of distinct SAE latents required to significantly improve classification performance on a probe for that concept. A higher count can indicate that more latent are fragmented into distinct components.
    
    \item \textbf{Absorption~\citep{chanin2024absorption}}
    This metric focuses on feature absorption. It measures the tendency of an SAE to learn two coupled hierarchical features (e.g., A and ``B excluding A'') instead of two independent features (A and B). The metric is calculated by diagnosing the activation patterns of the general SAE latents: measuring the frequency at which they fail to activate when a token-aligned child latent is present in the input. A lower score is desirable, indicating better feature isolation and reduced feature absorption.

    \item \textbf{Max CosSim}
    This metric quantifies the maximum cosine similarity between the decoder weight vectors of all learned latents, reflecting the feature composition level of SAEs~\citep{bussmannlearning}. High similarity suggests significant directional overlap or redundancy among SAE latents.

    \item \textbf{Disentanglement~\citep{huang-etal-2024-ravel}} This benchmark evaluates the ability of interpretability methods to disentangle independent attributes within language model representations. It utilizes interchange interventions to test whether a targeted feature (e.g., city country) can be modified without affecting other related attributes (e.g., city language). The performance is summarized by the disentangle score: $ \text{Disentangle Score} = \frac{1}{2} (\text{Cause} + \text{Iso}) $ where \textbf{Cause} measures the success rate of changing the target attribute's value through intervention, and \textbf{Iso} (Isolation) measures the frequency with which non-target attributes remain unchanged. A high score indicates that the SAE has successfully localized individual concepts into independent, causal units. 
    
\end{itemize}

\section{GPU Budget}
We ran all SAE training experiments utilizing an NVIDIA H800 GPU, consuming a total of 300 GPU hours and achieving a peak memory utilization of 65 GB.

Evaluating a trained SAE across all metrics required approximately 1 GPU hour. The breakdown of evaluation time per SAE is as follows:
\begin{itemize}
    \item Reconstruction Fidelity Metrics: ~30 minutes
    \item Interpretability Analysis: ~10 minutes
    \item Feature Hierarchy Metrics: ~10 minutes
    \item Disentanglement Metrics: ~15 minutes
\end{itemize}
In terms of memory footprint, the reconstruction fidelity evaluation was the most demanding, requiring up to 50 GB of VRAM. The interpretability and feature hierarchy analyses were less memory-intensive, each requiring approximately 15 GB.

\section{Robustness and Generalization}\label{app:robustness}

\begin{table}[h]
\centering
\begin{tabular}{lccccc}
\toprule
Method & KL Score $\uparrow$ & Interp $\uparrow$ & Composition $\downarrow$ & Absorption $\downarrow$ & Split $\downarrow$ \\
\midrule
Batch TopK & $\underline{0.9621 \pm 0.0006}$ & $\mathbf{0.9311 \pm 0.0098}$ & $0.3325 \pm 0.0003$ & $0.1908 \pm 0.0240$ & $1.0962 \pm 0.0722$ \\
Matryoshka & $0.9597 \pm 0.0008$ & $\mathbf{0.9311 \pm 0.0120}$ & $0.1870 \pm 0.0003$ & $\mathbf{0.0360 \pm 0.0066}$ & $1.0673 \pm 0.0361$ \\
Ort        & $0.9615 \pm 0.0006$ & $0.9297 \pm 0.0082$ & $\underline{0.1038 \pm 0.0004}$ & $0.0819 \pm 0.0203$ & $\underline{1.0577 \pm 0.0218}$ \\
C$^2$R     & $\mathbf{0.9628 \pm 0.0004}$ & $\underline{0.9309 \pm 0.0052}$ & $\mathbf{0.0991 \pm 0.0003}$ & $\underline{0.0656 \pm 0.0185}$ & $\mathbf{1.0385 \pm 0.0238}$ \\
\bottomrule
\end{tabular}
\caption{Statistical validation with 95\% confidence intervals over 5 independent runs (Gemma-2-2B, layer 12, $k{=}100$). \textbf{Bold} and \underline{underline} indicate the best and second-best values, respectively.}
\label{tab:stat_valid}
\end{table}

\begin{table}[h]
\centering
\begin{tabular}{lccccc}
\toprule
Method & KL Score $\uparrow$ & Interp $\uparrow$ & Composition $\downarrow$ & Absorption $\downarrow$ & Split $\downarrow$ \\
\midrule
Batch TopK & $\mathbf{0.9889}$ & $0.9220$ & $0.2677$ & $0.0074$ & $1.1154$ \\
Matryoshka & $0.9870$ & $\underline{0.9277}$ & $0.1422$ & $\underline{0.0032}$ & $1.1154$ \\
Ort        & $\underline{0.9886}$ & $0.9214$ & $\underline{0.0945}$ & $0.0034$ & $1.1154$ \\
C$^2$R     & $0.9874$ & $\mathbf{0.9333}$ & $\mathbf{0.0675}$ & $\mathbf{0.0018}$ & $1.1154$ \\
\bottomrule
\end{tabular}
\caption{Results on Qwen3-8B (layer 20, $k{=}100$).}
\label{tab:qwen3}
\end{table}

\begin{table}[h]
\centering
\begin{tabular}{lccccc}
\toprule
Method & KL Score $\uparrow$ & Interp $\uparrow$ & Composition $\downarrow$ & Absorption $\downarrow$ & Split $\downarrow$ \\
\midrule
Batch TopK & $\mathbf{0.9860}$ & $0.9405$ & $0.3334$ & $0.1470$ & $1.1923$ \\
Matryoshka & $0.9853$ & $\mathbf{0.9552}$ & $0.1803$ & $\mathbf{0.0349}$ & $\underline{1.1154}$ \\
Ort        & $\underline{0.9859}$ & $0.9265$ & $\underline{0.1200}$ & $0.0979$ & $\mathbf{1.0385}$ \\
C$^2$R     & $0.9858$ & $\underline{0.9443}$ & $\mathbf{0.0816}$ & $\underline{0.0410}$ & $\mathbf{1.0385}$ \\
\bottomrule
\end{tabular}
\caption{Results on Llama-3-8B (layer 20, $k{=}100$).}
\label{tab:llama3}
\end{table}

\begin{table}[h]
\centering
\begin{tabular}{lccccc}
\toprule
Method & KL Score $\uparrow$ & Interp $\uparrow$ & Composition $\downarrow$ & Absorption $\downarrow$ & Split $\downarrow$ \\
\midrule
TopK       & $0.9720$ & $0.9307$ & $0.3270$ & $0.1325$ & $1.5769$ \\
Batch TopK & $0.9706$ & $0.9323$ & $0.3329$ & $0.1087$ & $1.3462$ \\
Matryoshka & $\mathbf{0.9727}$ & $0.9410$ & $0.1781$ & $\mathbf{0.0205}$ & $\mathbf{1.2308}$ \\
Ort        & $0.9722$ & $\underline{0.9543}$ & $\underline{0.1000}$ & $0.0352$ & $\underline{1.2692}$ \\
C$^2$R     & $\mathbf{0.9727}$ & $\mathbf{0.9570}$ & $\mathbf{0.0781}$ & $\underline{0.0260}$ & $\underline{1.2692}$ \\
\bottomrule
\end{tabular}
\caption{Results on Gemma-2-2B layer 20 ($k{=}100$).}
\label{tab:gemma_l20}
\end{table}

\begin{table}[h]
\centering
\begin{tabular}{lccccc}
\toprule
Method & KL Score $\uparrow$ & Interp $\uparrow$ & Composition $\downarrow$ & Absorption $\downarrow$ & Split $\downarrow$ \\
\midrule
TopK       & $\mathbf{0.9631}$ & $0.9196$ & $0.3416$ & $0.2987$ & $1.2692$ \\
Batch TopK & $0.9621$ & $\mathbf{0.9313}$ & $0.3430$ & $0.2149$ & $1.1538$ \\
Matryoshka & $0.9598$ & $0.8870$ & $0.1892$ & $\mathbf{0.0241}$ & $\underline{1.0769}$ \\
Ort        & $\underline{0.9629}$ & $0.9072$ & $\underline{0.1053}$ & $0.1058$ & $\mathbf{1.0385}$ \\
C$^2$R     & $\mathbf{0.9631}$ & $\underline{0.9302}$ & $\mathbf{0.0870}$ & $\underline{0.0654}$ & $\mathbf{1.0385}$ \\
\bottomrule
\end{tabular}
\caption{Results with 1B training tokens on OpenWebText (Gemma-2-2B, layer 12, $k{=}100$).}
\label{tab:1b_owt}
\end{table}

\begin{table}[h]
\centering
\begin{tabular}{lccccc}
\toprule
Method & KL Score $\uparrow$ & Interp $\uparrow$ & Composition $\downarrow$ & Absorption $\downarrow$ & Split $\downarrow$ \\
\midrule
Batch TopK & $0.9606$ & $\underline{0.9211}$ & $0.3227$ & $0.1920$ & $1.1538$ \\
Matryoshka & $0.9602$ & $0.9117$ & $0.1781$ & $\mathbf{0.0158}$ & $\mathbf{1.0385}$ \\
Ort        & $\underline{0.9611}$ & $0.9202$ & $\underline{0.1053}$ & $0.0627$ & $\underline{1.0769}$ \\
C$^2$R     & $\mathbf{0.9618}$ & $\mathbf{0.9424}$ & $\mathbf{0.0766}$ & $\underline{0.0396}$ & $\mathbf{1.0385}$ \\
\bottomrule
\end{tabular}
\caption{Results on The Pile (Gemma-2-2B, layer 12, $k{=}100$).}
\label{tab:pile}
\end{table}

\section{Ablation Study}\label{app:ablation}
Table~\ref{tab:ablation} reports the component ablation results, where removing the nearest-neighbor restriction (NoNNR) or the ReLU cosine gate (NoRCG) leads to severe degradation in reconstruction fidelity or inflated composition scores. Table~\ref{tab:lambda} reports the sensitivity of C$^2$R to the regularization strength $\lambda_{\text{C}^2\text{R}}$, showing that $\lambda_{\text{C}^2\text{R}}{=}5$ achieves the best trade-off.

\begin{table}[h]
\centering
\begin{tabular}{lccccc}
\toprule
Method & KL Score $\uparrow$ & Interp $\uparrow$ & Composition $\downarrow$ & Absorption $\downarrow$ & Split $\downarrow$ \\
\midrule
Batch TopK       & $0.9598$ & $0.9208$ & $0.3321$ & $0.1985$ & $\underline{1.1154}$ \\
Ort              & $\underline{0.9617}$ & $0.9208$ & $\underline{0.1046}$ & $0.0606$ & $\underline{1.1154}$ \\
C$^2$R (NoNNR)   & $0.9439$ & $\underline{0.9488}$ & $0.5655$ & $\mathbf{0.0255}$ & $1.1538$ \\
C$^2$R (NoRCG)   & $0.8439$ & $\mathbf{0.9873}$ & $0.4326$ & $\underline{0.0520}$ & $1.1923$ \\
C$^2$R           & $\mathbf{0.9629}$ & $0.9239$ & $\mathbf{0.0990}$ & $0.0590$ & $\mathbf{1.0769}$ \\
\bottomrule
\end{tabular}
\caption{Component ablation. NoNNR: without the nearest-neighbor restriction. NoRCG: without the ReLU cosine gate. Removing either component causes a severe degradation in reconstruction fidelity or feature composition.}
\label{tab:ablation}
\end{table}

\begin{table}[h]
\centering
\begin{tabular}{lccccc}
\toprule
$\lambda_{\text{C}^2\text{R}}$ & KL Score $\uparrow$ & Interp $\uparrow$ & Composition $\downarrow$ & Absorption $\downarrow$ & Split $\downarrow$ \\
\midrule
-- (Batch TopK) & $0.9598$ & $0.9196$ & $0.3321$ & $0.1985$ & $1.1154$ \\
-- (Ort)        & $0.9617$ & $0.9196$ & $0.1046$ & $0.0606$ & $1.1154$ \\
\midrule
$0.1$  & $0.9629$ & $0.9291$ & $0.2944$ & $0.1926$ & $\mathbf{1.0385}$ \\
$0.5$  & $\mathbf{0.9638}$ & $\mathbf{0.9496}$ & $0.2138$ & $0.1726$ & $\mathbf{1.0385}$ \\
$1$    & $\underline{0.9633}$ & $\underline{0.9407}$ & $0.1676$ & $0.1349$ & $1.1154$ \\
$5$    & $0.9629$ & $0.9239$ & $\underline{0.0990}$ & $\underline{0.0590}$ & $\underline{1.0769}$ \\
$10$   & $0.9521$ & $0.9168$ & $\mathbf{0.0777}$ & $\mathbf{0.0425}$ & $\mathbf{1.0385}$ \\
\bottomrule
\end{tabular}
\caption{Sensitivity to $\lambda_{\text{C}^2\text{R}}$. Baselines (Batch TopK and Ort) are shown above the rule for reference. $\lambda_{\text{C}^2\text{R}}{=}5$ achieves the best trade-off between reconstruction fidelity and structural metrics.}
\label{tab:lambda}
\end{table}

\section{Compatibility with Other SAE Architectures}\label{app:compatibility}
Table~\ref{tab:backbone} reports the full results of applying C$^2$R to three different SAE backbones: TopK, OrtSAE, and AbsTopK. In each pair, adding C$^2$R improves all structural metrics without degrading reconstruction fidelity, confirming that C$^2$R is a backbone-agnostic regularizer.

\begin{table}[h]
\centering
\begin{tabular}{lccccc}
\toprule
Method & KL Score $\uparrow$ & Interp $\uparrow$ & Composition $\downarrow$ & Absorption $\downarrow$ & Split $\downarrow$ \\
\midrule
TopK             & $0.9620$ & $0.8762$ & $0.3311$ & $0.2680$ & $1.3462$ \\
TopK + C$^2$R    & $\mathbf{0.9627}$ & $\mathbf{0.9227}$ & $\mathbf{0.1004}$ & $\mathbf{0.0752}$ & $\mathbf{1.0769}$ \\
\midrule\midrule
Ort              & $0.9617$ & $0.9196$ & $0.1046$ & $0.0606$ & $1.1154$ \\
Ort + C$^2$R     & $\mathbf{0.9623}$ & $\mathbf{0.9217}$ & $\mathbf{0.0662}$ & $\mathbf{0.0524}$ & $\mathbf{1.0385}$ \\
\midrule\midrule
AbsTopK          & $0.9576$ & $0.6759$ & $0.2357$ & $0.1407$ & $1.3462$ \\
AbsTopK + C$^2$R & $\mathbf{0.9580}$ & $\mathbf{0.7579}$ & $\mathbf{0.0810}$ & $\mathbf{0.0846}$ & $\mathbf{1.0769}$ \\
\bottomrule
\end{tabular}
\caption{C$^2$R applied to different SAE backbones (Gemma-2-2B, layer 12, $k{=}100$). Within each pair, the better value is in \textbf{bold}. C$^2$R consistently improves structural metrics across all architectures.}
\label{tab:backbone}
\end{table}

\section{Downstream Causal Intervention Tasks}\label{app:downstream}

Table~\ref{tab:downstream} reports the results on two causal intervention tasks from SAEBench. C$^2$R achieves the second-best performance on both SCR and TPP, outperforming TopK, Batch TopK, and Ort, and achieving comparable results to Matryoshka.

\begin{table}[h]
\centering
\begin{tabular}{lcc}
\toprule
Method & SCR $\uparrow$ & TPP $\uparrow$ \\
\midrule
TopK       & $0.0481$ & $0.0038$ \\
Batch TopK & $0.0353$ & $0.0045$ \\
Matryoshka & $\mathbf{0.1172}$ & $\mathbf{0.0370}$ \\
Ort        & $0.0859$ & $0.0061$ \\
C$^2$R     & $\underline{0.1047}$ & $\underline{0.0226}$ \\
\bottomrule
\end{tabular}
\caption{Downstream causal intervention tasks (Gemma-2-2B, layer 12, $k{=}100$). SCR: Spurious Correlation Removal. TPP: Targeted Probe Perturbation. \textbf{Bold} and \underline{underline} indicate the best and second-best values, respectively.}
\label{tab:downstream}
\end{table}

\section{Empirical Verification of Eq.~\ref{eq:feature_disp_condition}}\label{app:eq13_verification}

We verify the condition in Eq.~\ref{eq:feature_disp_condition} using our trained SAEs on Gemma-2-2B layer 12 with a 4M-token test set from SAEBench.

Figure~\ref{fig:eq13_ratio} shows the distribution of the log A/B ratio, i.e., $\log \frac{\sum_{i=1}^{m} (z_{1}^{(i)})^{2}}{\sum_{i=m+1}^{m+n} (z_{2}^{(i)})^{2}}$, across all absorption pairs. The median A/B ratio is 81.56, confirming that the cumulative energy of the primary feature strongly dominates the residual in practice.

\begin{figure}[h]
\centering
\includegraphics[width=\columnwidth]{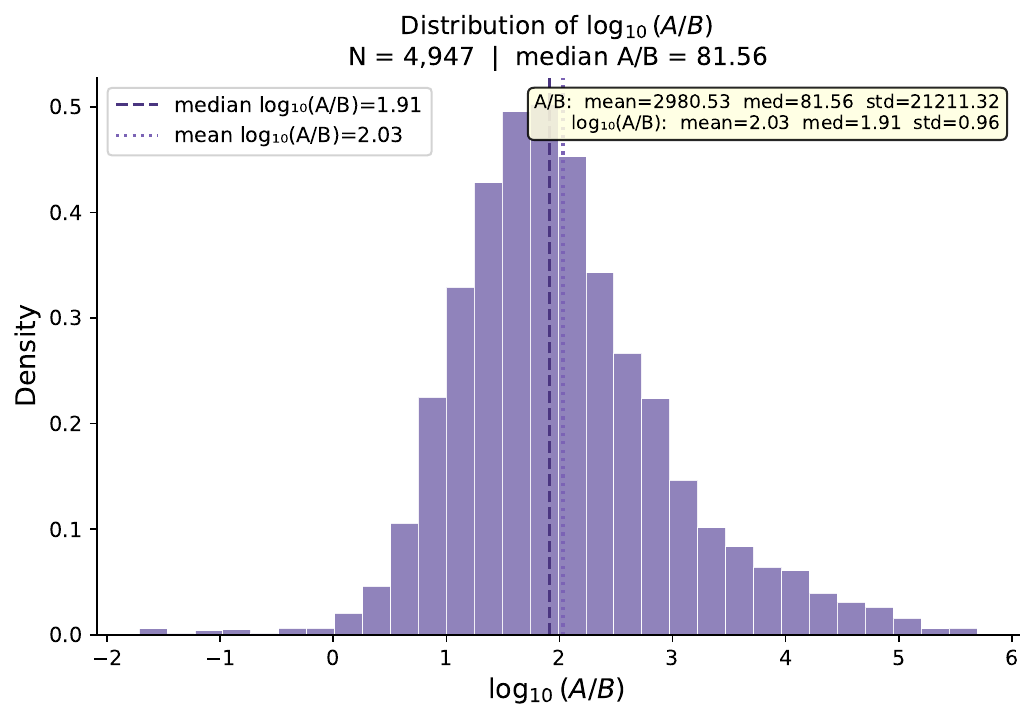}
\caption{Distribution of the log A/B ratio across absorption pairs ($N{=}4{,}947$, median $= 81.56$).}
\label{fig:eq13_ratio}
\end{figure}

Figure~\ref{fig:eq13_scatter} shows the scatter plot of $\alpha$ versus the right-hand side of Eq.~\ref{eq:feature_disp_condition} for all absorption pairs. Points above the diagonal satisfy the condition. 88.1\% of pairs ($N{=}4{,}555$) fall in the satisfied region, and the violated pairs are concentrated in a low-$\alpha$ regime where absorption is minimal.

\begin{figure}[h]
\centering
\includegraphics[width=\columnwidth]{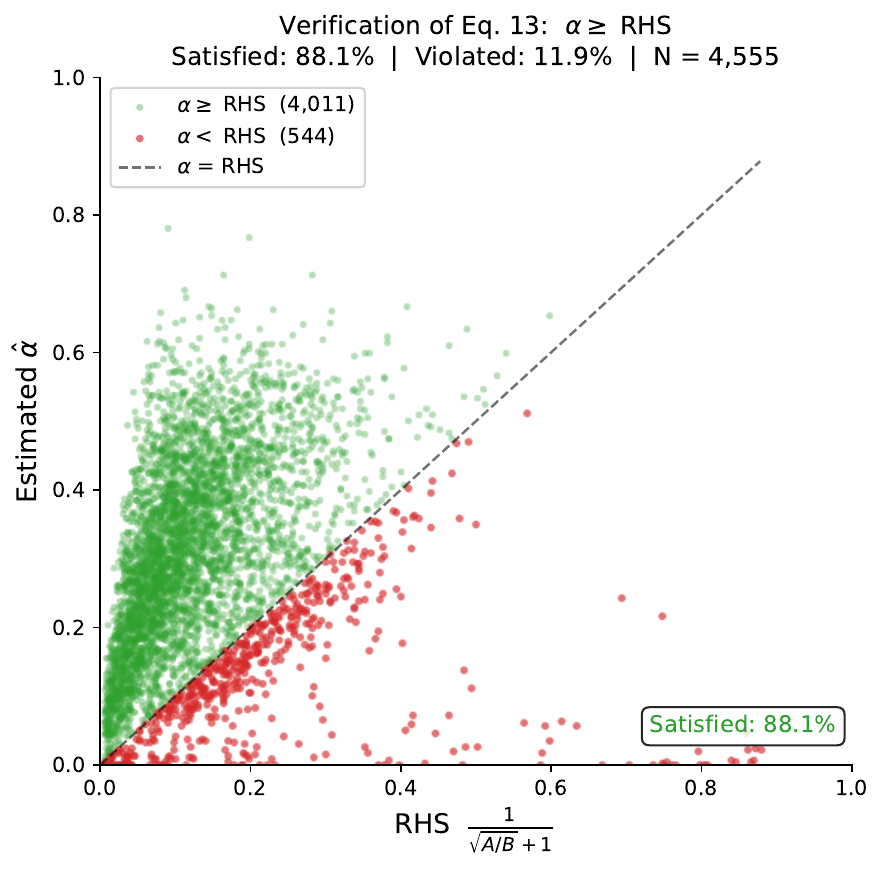}
\caption{Scatter plot of $\alpha$ vs.\ RHS of Eq.~\ref{eq:feature_disp_condition} ($N{=}4{,}555$). 88.1\% of pairs satisfy the condition.}
\label{fig:eq13_scatter}
\end{figure}

\section{Relationship to OrtSAE}\label{appx:ortsae_relation}

The chunk-wise nearest-neighbor calculation in Appendix~\ref{appx:efficiency} is an engineering optimization to reduce complexity and control variables, not a methodological dependence. Fundamentally, OrtSAE constrains decoder weights, whereas C$^2$R operates in activation space using the Minkowski inequality. In C$^2$R, decoder cosine similarity acts as a gating weight ($\rho^2$) to target redundant features, creating a gradient dynamically scaled by activation magnitude $S$. This adaptive approach distinguishes C$^2$R from OrtSAE's uniform penalty, protecting low-frequency features from premature destruction. As shown in Table~\ref{tab:backbone}, C$^2$R can be applied on top of OrtSAE to achieve further improvements, confirming that the two methods are complementary.

\section{Sensitivity to Feature Frequency}\label{appx:rare_features}

Since C$^2$R aggregates batchwise $\ell_2$ norms, the effective regularization strength on a given latent pair scales with how frequently those features appear in the batch. To investigate whether rare features receive insufficient regularization, we analyze the relationship between feature frequency and absorption rate using 26 letter-specific features from SAEBench on a Gemma-2-2B Batch TopK SAE.

As shown in Figure~\ref{fig:absorption_vs_frequency}, the Pearson correlation between feature frequency and absorption rate is $r = +0.301$ ($p = 0.135$), which is not statistically significant. The Spearman correlation is $\rho = +0.403$ ($p = 0.041$), suggesting a slight positive trend where absorption may marginally increase with frequency rather than decrease. Rare features such as Q, J, and X exhibit absorption rates below 0.1, comparable to or lower than high-frequency features such as C and S. This indicates that the consistency pressure remains effective across the frequency spectrum and that rare features do not suffer from reduced reliability due to batch statistics.

\begin{figure}[h]
\centering
\includegraphics[width=0.7\columnwidth]{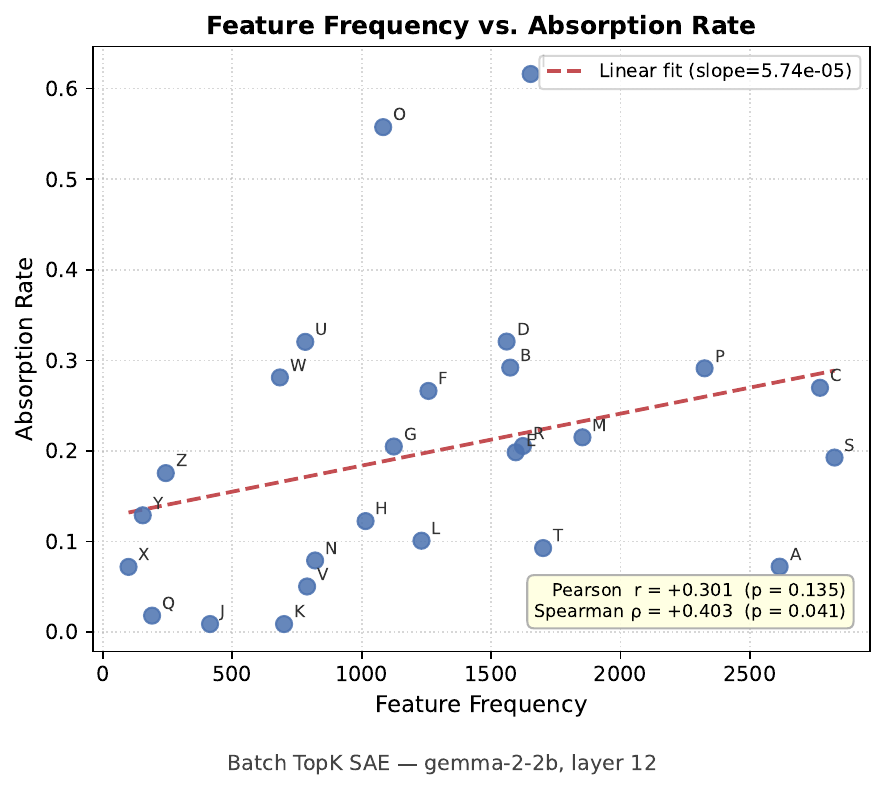}
\caption{Absorption rate vs.\ feature frequency for 26 letter-specific features (Gemma-2-2B, Batch TopK SAE). No significant negative correlation is observed, indicating that rare features do not suffer from higher absorption.}
\label{fig:absorption_vs_frequency}
\end{figure}

\section{Dead Feature Rates}\label{appx:dead_features}

Table~\ref{tab:dead_features} reports the dead feature rates for all methods on Gemma-2-2B layer 12. C$^2$R maintains high dictionary utilization with a dead feature rate under 1\%.

\begin{table}[h]
\centering
\begin{tabular}{lc}
\toprule
Method & Dead Feature Rate $\downarrow$ \\
\midrule
TopK       & 0.11\% \\
Batch TopK & \textbf{0.06\%} \\
Matryoshka & \underline{0.10\%} \\
Ort        & 4.61\% \\
C$^2$R     & 0.90\% \\
\bottomrule
\end{tabular}
\caption{Dead feature rates on Gemma-2-2B layer 12. C$^2$R maintains over 99\% dictionary utilization.}
\label{tab:dead_features}
\end{table}

\end{document}